\title{Tuning without Peeking: Provable Generalization Bounds and Robust LLM Post-Training}
\author[1,2]{Ismail Labiad}
\author[1,2]{Mathurin Videau}
\author[2,3]{Matthieu Kowalski}
\author[2,3]{Marc Schoenauer}
\author[4]{Alessandro Leite}
\author[1,5]{Julia Kempe}
\author[1]{Olivier Teytaud}
\affiliation[1]{Meta FAIR}
\affiliation[2]{Université Paris-Saclay}
\affiliation[3]{LISN, Inria, CNRS}
\affiliation[4]{INSA Rouen Normandy, LITIS}
\affiliation[5]{NYU Courant Institute and CDS}
\newcommand{\retrofit}{{\em BBoxER}}
\abstract{
Gradient-based optimization is the workhorse of deep learning, offering efficient and scalable training via backpropagation. However, exposing gradients during training can leak sensitive information about the underlying data, raising privacy and security concerns such as susceptibility to data poisoning attacks. In contrast, black-box optimization methods, which treat the model as an opaque function, relying solely on function evaluations to guide optimization, offer a promising alternative in scenarios where data access is restricted, adversarial risks are high, or overfitting is a concern.
This paper introduces \retrofit, an evolutionary black-box method for LLM {\em post-training}  that induces an information bottleneck via implicit compression of the training data.
Leveraging the tractability of information flow, we provide non-vacuous generalization bounds and strong theoretical guarantees for robustness to data poisoning attacks and extraction attacks, while ensuring privacy by design.
In experiments with LLMs, we demonstrate empirically that black-box optimization methods—despite the scalability and computational challenges inherent to black-box approaches—are able to learn, showing how a few iterations of \retrofit\ improve performance, generalize well on a benchmark of reasoning datasets, and are robust to membership inference attacks. This positions \retrofit\ as an attractive add-on on top of gradient-based optimization, offering suitability for deployment in restricted environments while also providing non-vacuous generalization guarantees.

}
\date{\today}
\definecolor{linen}{RGB}{250,240,230}
\definecolor{mydarkblue}{rgb}{0,0.08,0.45}
\crefname{ALC@unique}{Line}{Lines}
\Crefname{ALC@unique}{Line}{Lines}
\crefname{algorithm}{Alg.}{Algs.}
\crefname{table}{Tab.}{Tabs.}
\crefname{section}{Sec.}{Secs.}
\crefname{theorem}{Thm}{Thms}
\crefname{theoremUnboxed}{Thm}{Thms}
\crefname{lemma}{Lem.}{Lems.}
\crefname{corollary}{Cor.}{Cors.}
\crefname{appendix}{App.}{Apps.}
\definecolor{paleBlue}{rgb}{0.9,0.9,1.0}
\definecolor{linkcolor}{RGB}{83,83,182}
\newcommand{\ie}{i.e.,}
\newcommand{\eg}{e.g.,}
\newcommand{\deprecate}[1]{}
\titlespacing*{\paragraph}{0pt}{0.5em}{0.5em}
\definecolor{MyGreen}{RGB}{0,175,0}
\definecolor{MyRed}{RGB}{250,90,90}
\newtheorem{theoremUnboxed}{Theorem}
\newtheorem{corollary}{Corollary}
\newtheorem{lemma}{Lemma}
\newtheorem{definition}{Definition}
\def\hp{parametrization}
\def\D{{\cal D}}
\def\X{{\cal X}}
\def\Y{{\cal Y}}
\def\N{{\mathbb{N}}}
\def\R{\mathbb{R}}
\def\e{\epsilon}
\def\w{\omega}
\def\E{\mathbb{E}}
\newcommand{\OneFifth}[0]{OneFifth}%
\newcommand{\CMA}[0]{CMA-ES}
\newcommand{\DCMA}[0]{D-CMA} %
\newcommand{\Lengler}[0]{Lengler} %
\newcommand{\COLengler}[0]{COLengler} %
\newcommand{\Triple}[0]{3-OneFifth} %
\newcommand{\MultiDisc}[0]{3-Disc} %
\newcommand{\Discrete}[0]{Discrete} %
\newcommand{\NgIohTuned}[0]{NgIohTuned} %
\newcommand{\BARD}[0]{2-DE/D-CMA} %
\newcommand{\FastGA}[0]{FastGA}
\newcommand{\Portfolio}[0]{Uniform-Disc}
\newcommand{\colormode}{0}
\newcommand{\setcolormode}[1]{%
  \renewcommand{\colormode}{#1}%
}
\DeclareRobustCommand{\changes}[1]{%
  \ifnum\colormode=1
    \textcolor{blue}{#1}%
  \else
    \textcolor{black}{#1}%
  \fi
}
\DeclareRobustCommand{\rebuttal}[1]{%
  \ifnum\colormode=1
    \textcolor{blue}{#1}%
  \else
    \textcolor{black}{#1}%
  \fi
}
\begin{document}

\maketitle

\section{Introduction}\label{sec:itroduction}

Large Language Models (LLMs) have revolutionized natural language processing, achieving strong performance across diverse tasks through training on massive datasets -- often hundreds of billions of tokens -- and large-scale architectures~\cite{kaplan2020scaling,wei2022emergent}. 
Yet LLMs suffer from persistent weaknesses, foremost among them vulnerability to extraction attacks~\cite{privacyattack3,privacyattack4,carlini2024stealing} where training data can be leaked~\cite{privacyattack1,privacyattack2}, raising potential privacy concerns. They are also vulnerable to data poisoning~\cite{wan2023poisoning,miranda2025preserving}, and lack meaningful generalization guarantees at scale. These challenges are amplified in settings where data is scarce, sensitive, or adversarially curated, underscoring the need for more robust and privacy-aware optimization methods.

Black-Box Optimization (BBO) covers methods suited to settings where gradient information is unreliable or unavailable (e.g., non-differentiable or noisy objectives). These algorithms are naturally robust to local minima and non-convexity. Although around for decades, BBO has been recognized only recently in machine learning, becoming a core tool in AutoML~\cite{autoMLbook2019}, for hyperparameter tuning, algorithm selection~\cite{hpo-hutter2019}, and neural architecture search~(NAS)~\cite{white2023NASinsights}. Beyond random and grid search, techniques like Evolutionary Algorithms (e.g., CMA-ES~\cite{seminalCMAES,CMA,tutorialCMAES}, Differential Evolution~\cite{de}, Particle Swarm Optimization~\cite{pso}) and Bayesian Optimization~\cite{movckus1975bayesian,mockus1981bayesian,garnettbayesoptbook2023} offer diverse trade-offs in scalability, sample efficiency, and exploration-exploitation dynamics. %
BBO methods still face limitations: they scale poorly to high-dimensional parameter spaces and require many function evaluations, making them unsuitable for full model training (except for Deep Reinforcement Learning, due to its high parallelism~\cite{openAI-ES4DeepRL2017}). To reconcile the strengths and weaknesses of both paradigms, we adopt a {\em hybrid strategy}: using gradient-based pretraining for scale, followed by BBO on a small, targeted subset of the model to tackle privacy, poisoning, and overfitting. 

Post-training introduces its own risks: instruction tuning and generation can lead to exploitative behavior~\cite{shu2023exploitability,huang2024catastrophic}, overfitting may harm robustness and generalization~\cite{llmgeneralize}, and data poisoning can occur during reinforcement learning from human feedback~(RLHF)~\cite{wang2024rankpoison, baumgrtner2024bestofvenom} or direct preference optimization~(DPO)~\cite{rando2024universal,poisoningthreat}. This setting is well-suited for retrofitting -- a concept initially developed to refine word vectors post-hoc using semantic knowledge~\cite{retrofitting2015}, which we generalize here to LLMs for arbitrary downstream tasks. 

We introduce \retrofit\ (Black-Box Evolutionary Retrofitting), a comparison-based black-box retrofitting method~\cite{arxivtelo} applicable after pretraining, fine-tuning, or reinforcement learning loops such as %
GRPO~\cite{grpo}. Positioned in the outermost layer of the pipeline, \retrofit\ requires no gradient access and integrates seamlessly with existing black-box libraries and algorithms~\cite{nevergrad}. The resulting sequence of queries and updates forms an implicit compression trace, enabling the use of compression-based generalization theory~\cite{campi2024compressiongeneralizationlearning,compressionbasedlearning}, currently the only route to non-vacuous generalization bounds for LLMs~\cite{compressionllm}. It allows us to derive strong generalization guarantees that - unlike VC dimension~\cite{vapnik1971uniform} or Rademacher complexity~\cite{bartlett2002rademacher} - do not depend on the number of modified parameters but on the complexity of the optimization trajectory.
As a result, \retrofit\ provides provable, {\em non-vacuous} generalization bounds to control overfitting, robustness to data poisoning and extraction attacks, while naturally ensuring privacy by design through its compression bottleneck.  It is uniquely positioned to provide improvements from aggregate feedback of billions of users anonymously, without accessing individual data.

While our approach provides theoretical guarantees, these guarantees are most relevant when they can be achieved without substantially reducing utility. We therefore evaluate \retrofit\ under query budgets of only a few hundred model evaluations and observe consistent improvements in performance and generalization on billion-scale LLMs across reasoning benchmarks. These results suggest that the framework can provide {\it provable generalization and safety while preserving practical usefulness}, making it a viable post-hoc adaptation method for LLMs, particularly in restricted settings.

\textbf{Our contributions.} 
\retrofit\ introduces a new perspective on LLM post-training: instead of fine-tuning weights or relying on reinforcement learning, we show that simple black-box retrofitting can achieve safe and modular adaptation. 
Building on this idea, we make three main contributions:

\phantom{~~~~}\scalebox{0.7}{$\bullet$} \textbf{A general-purpose retrofitting framework.} 
We formalize \retrofit\ (\cref{alg:retro}), a comparison-based black-box optimization scheme that compresses optimization traces, enabling safe and modular adaptation of pretrained and post-trained LLMs (\cref{sec:retrofitting}).

\phantom{~~~~}\scalebox{0.7}{$\bullet$} \textbf{Strong theoretical guarantees.} 
We derive non-vacuous generalization bounds (\cref{cor1}) that scale linearly with dataset size, provide provable robustness to poisoning (\cref{tpp}) and to extraction attacks (\cref{co:unfair}), while additionally ensuring privacy by design (\cref{sec:generalization}).

\phantom{~~~~}\scalebox{0.7}{$\bullet$} \textbf{Empirical validation on billion-scale LLMs.} 
Despite tight query budgets, we show consistent gains on GSM8K and related math benchmarks with Llama3.1-8B and Qwen-2.5-3B, and provide empirical evidence supporting our theoretical claims: retrofitted models with \retrofit\ resist Membership Inference Attacks, unlike fine-tuned counterparts at equal utility (\cref{sec:xp}).

\section{Background and Related Work}\label{sec:relatedwork}
An extensive exposition of related work is provided in~\Cref{app:extrelatedwork}.

{\bf Generalization Bounds for LLMs.}
Classical generalization theory based on uniform convergence and VC dimension fails for large models like LLMs. Modern approaches either apply PAC-Bayes bounds using compression techniques~\cite{zhou2018non,LotfiPACcompression,compressionllm} or adopt algorithmic stability~\cite{as}, which does not rely on hypothesis class complexity. Our approach aligns with the latter by leveraging comparison-based optimization and using union-bound-like arguments (\eg{} Bonferroni correction) to derive non-trivial generalization bounds.

{\bf Data Poisoning.}
Recent poisoning attacks show that corrupting as little as 1–5\% of fine-tuning or preference data can induce persistent, targeted behaviors in LLMs~\cite{baumgrtner2024bestofvenom,wang2024rankpoison}, even in quantized variants~\cite{llmpoisoq}. These attacks evade standard detection and persist during inference. Moreover, the common strategy of training on public data then fine-tuning on private data is also vulnerable to backdoor injection~\citep{feng2024privacy}. Specifically, \citet{feng2024privacy} showed that allowing neurons to retain a gradient from a single input and later “deactivate” to avoid overwriting makes models susceptible to gradient inversion. Our black-box optimization approach is, by design, secure against such attacks. While theoretical bounds exist for simpler models~\cite{steinhardt2017certified}, robust guarantees for LLMs remain elusive.

\paragraph{Post-training for Reasoning.} Unlike common RL approaches such as GRPO~\cite{grpo}, our approach combines low-rank adaptations with retrofitting techniques limiting overfitting risks and prevents the amplification of memorized or poisoned content. Other works showed that it is possible to post-train LLMs using evolution algorithms, specifically evolution strategies in the case of \cite{qiu2025evolution, sarkar2025evolution}, but they differ from our setup as we only rely on comparison results rather than the result of the evaluation over samples.

\section{Methods: Retrofitting with BBO}\label{sec:retrofitting}
Retrofitting~\cite{douglas:06,dawson:07,dixon:13} is routinely deployed in industry, in order to adapt old devices to new contexts or new data. 
In machine learning, the term has mainly been used in natural language processing since~\citet{retrofitting2015}, who adapted the representation of the word vector to take semantic knowledge into account. It is performed after classical training, \ie{} without retraining the entire network.
\citet{arxivtelo} adapted retrofitting to the black-box optimization context. In the present paper, we \rebuttal{adapt retrofitting to LLM post-training and instantiate it} as \retrofit\ and consider comparison-based black-box optimization algorithms to obtain the explicit compression properties that lend themselves to the strong bounds proved in~\cref{sec:generalization}.

A key motivation for using BBO is that it allows post-training with non-differentiable loss functions such as accuracy or signals from A/B testing. And since it only requires model inference, it significantly reduces memory usage. More importantly, the main motivation of this work is to leverage the compression achieved by reducing the entire dataset’s forward pass to just a single—or a few—comparison results, yielding a finite branching factor. This constrains our deployment of BBO to comparison-based algorithms, which include Evolutionary Algorithms (EA) and Strategies (ES) like Population-Based Evolution Strategies (e.g., CMA, DiagonalCMA), Differential Evolution, and Particle Swarm Optimization - all described in \cref{sec:otherBBO}. While a vast class of BBO algorithms is not {\em comparison based} (see \cref{sec:otherBBO} for an anthology of BBO) this nonetheless gives us a substantial variety of BBO algorithms to deploy.
For example, the (1+1)-OneFifth strategy samples new points from a Gaussian centered on the current best solution and adjusts the mutation step size based on the acceptance rate. Similarly, CMA-ES updates both the mean and covariance of its sampling distribution based on comparisons within the current population. More algorithms are described in~\cref{sec:bbo}. Our \retrofit{} framework is modular and algorithm agnostic, and we instantiate it with methods from the Nevergrad suite~\cite{nevergrad}.

\subsection{The Retrofitting Framework: \retrofit}
{\bf Notation.} 
Let $\D$ be the class of datasets of size $s$ over a domain $\X\times \Y$. Models map $\X$ to $\Y$.
Let $\w$ be the random seed corresponding to the possibly randomized optimization algorithm $a$ and $\w_D$ the random seed associated with the draw of $D\in \D$.

\paragraph{Initial model and modifiers.} We denote $m_0$ the initial model, irrespective of its creation.
$modified(m_0,x)$ refers to a modification of $m_0$ parametrized by some $x$. We prove all results for an arbitrary $modified$ function.
For example, $modified$ can be a simple added perturbation to all model parameters, or a multiplicative low-rank update to a particular subset of the LLM layers, $x$ then contains the low-rank matrices that parametrize the update~(Examples of such $modified$ function are provided in \cref{concrete}). 

\cref{alg:retro} presents our abstract retrofitting framework \retrofit.

\begin{center}
\begin{algorithm}[H]
\caption{\small{\retrofit($m_0,\w,D,a,b$): 
Returns a modification of  initial model $m_0$ by BBO algorithm $a$ on dataset $D$ with budget $b$ and random seed $\w$.}
}
\label{alg:retro}
\begin{algorithmic}[1]
\begin{scriptsize}
\STATE{$I_0=a.initialize(\w)$}\label{initalg}
\FOR{$i\in\{1,\dots,b\}$}
\STATE{$m_i=modified(m_0,x_i:=a.ask())$}\label{line:ask}
\STATE{$k_i:=a.numCases()$}\label{branching}
\STATE{$choice_i=a.compare((m_1,\dots,m_i),D) \in \{1,\dots,k_i\}$}\label{alg:bound}
\STATE{$a.tell(choice_i)$}\label{alg:update}
\STATE{$I_i=(I_{i-1},choice_i)$}\label{alg:memorize}
\ENDFOR
\STATE{$FinalModel = modified(m_0, \widehat{x} = a.recommend())$} \label{finalresult}
\end{scriptsize}
\end{algorithmic}
\end{algorithm}
\end{center}

First, Algorithm $a$ is initialized and returns its initial internal state $I_0$ in \cref{initalg}. Then for $b$ iterations, Algorithm $a$, via $a.ask$ at \cref{line:ask}, proposes a new sample $x_i$, which is used to obtain a new model $m_i$ through $modified$, \rebuttal{which can be any function that modifies the model weights using the sampled noise by Algorithm $a$.} \cref{branching} introduces the branching factor $k_i$, representing the finite number of possible outputs of $a.compare$, which we assume is upper bounded. At \cref{alg:bound}, $a.compare$ returns $choice_i \in \{1,\dots,k_i\}$, the outcome of comparing various previously proposed models $m_1,\dots,m_i$ on the data $D$.
For instance, the comparison may involve the current model and the best model obtained so far, which corresponds to $k_i = 2$. The outcome of this comparison can be based on user-provided preference, such as in A/B testing, or on performance evaluations over $D$, where we compute a metric (e.g., accuracy) but only retain the binary comparison result (e.g., 0 or 1).
In \cref{alg:memorize}, the internal state of $a$ is updated to account for the comparison result. Finally $a.recommend$ at \cref{finalresult} returns the modification $\widehat{x}$ used to obtain the final model. \rebuttal{A step by step execution example of \retrofit{} can be found in \cref{section:bboxer_example}.}

It is important to note that the internal state $I_b$ of algorithm $a$ is entirely determined by $\w,a,b,\allowbreak choice_1,\dots,choice_b$ and contains all the information needed to obtain $\widehat{x}$, and thus the final modified model. Moreover, the number of distinct possible outputs of \retrofit\ is upper bounded by the number of distinct possible values of $I_b$. We assume that the randomness of algorithm $a$ is fixed given the seed $\w$; stochastic algorithms can be obtained by randomizing the seed $\omega$ in $a$.

\subsection{\retrofit: Link with compression}\label{subsec:compression}
In~\cref{alg:retro}, $FinalModel=modified(m_0,\hat{x})$ is a deterministic function of $m_0,\w,a,b,D$.
And since $\widehat x$ only depends on $\w,a,b,choice_1,\dots,choice_b$, we can construct $\widehat x$ without $D$. Hence, there exists a mapping X
\begin{flalign}\label{eq:impact}
    \text{s.t.} && \widehat x  &=  X(\w,a,b,choice_1,\dots,choice_b) &&
\end{flalign}

But $(choice_1,\dots,choice_b)$ is a deterministic function of $m_0,\w,a,b,D$. Hence, there exists a deterministic $compression$ function $c$ defined by
\begin{equation}
    c(m_0,\w,a,b,D):=(choice_1,\dots,choice_b) \label{eq:compression}
\end{equation}

Here, $X$ mimics the \retrofit\ \cref{alg:retro} but returns $\widehat x$ instead of $modified(m_0,\widehat x)$, and uses $c$ instead of $D$. 
The dataset, replaced by $c=(choice_1,\dots,choice_b)$, is entirely removed from  \cref{eq:impact}. 
To view \retrofit\ as compression, we can merge~\cref{eq:impact} and~\cref{eq:compression} into:
\begin{equation}\label{eq:allinone}
FinalModel=modified \big(m_0,X(\w,a,b, c:=compression(m_0,\w,a,b,D))\big)
\end{equation}

$FinalModel$ only depends on the data through the compression bottleneck $c$ of bounded size (see~
\cref{fig:fairuse}, Appendix, for illustration). 
The impact of this compression is formalized in~\cref{sec:generalization}, and used to prove guarantees in terms of overfitting, privacy robustness w.r.t poisoning and extraction~attacks.

\section{Theoretical analysis}\label{sec:generalization}

Analyzing \retrofit\ within a unified framework through its information bottleneck that compresses the dataset into a sequence of comparisons, allows us to state and prove strong and precise bounds on its ability to preserve generalization, its robustness to poisoning, its privacy guarantee, and its robustness against extraction attacks. For mathematical background see~\cref{sec:thapp}, additional results in~\cref{genbou}, and proofs in~\cref{sec:proofs}.

\subsection{Overfitting bounds}\label{mathover}
In this section, we address generalization by bounding the gap between empirical and true loss in terms of the variability of the algorithm’s internal state. Notably, these bounds are governed by the optimization path, with no dependence on the number of model parameters optimized during the retrofitting process.

Let $L(x) = L(modified(m_0,x))$ denote the true loss over dataset distribution $\D$, and $\widehat{L}(x) = \widehat{L}(modified(m_0,x))$ the empirical loss on sampled dataset $D$. Let $N(\w, a, b)$ denote the number of possible internal states over all possible datasets in $\D$, then from~\cref{alg:bound} in~\cref{alg:retro} we have:
\begin{align}
N(\w, a, b) \leq \sup_{D \in \D} \prod_{i=1}^b k_i(\w, D, a, b).\label{zebound}
\end{align}
We fix the algorithm $a$ seed \(\w\) (the stochastic case is discussed in~\cref{app:stochastic}) and study how the number of distinct possible internal states affects generalization. \cref{deterthm} shows that the generalization gap is controlled by the number of reachable configurations in~\cref{zebound}. It also provides a uniform bound over all intermediate models \((m_1, \dots, m_b)\), ensuring that empirical losses remain close to the true risk throughout optimization. The bound is obtained using mild assumptions, only requiring that the empirical loss is a good proxy for the true loss, a standard concentration-bound hypothesis in classical learning theory~\cite{VVW,vap95} (see ~\cref{sec:ldb}).

\setcounter{theoremUnboxed}{0}
\begin{theoremUnboxed}[Deterministic case]\label{deterthm}
Let $a$ be an algorithm, $b$ a budget, $s$ the dataset size, and $\epsilon > 0$. Assume $N(\w,a,b)$ is finite and that for some $\delta_{1,\epsilon}$,
\(
    \forall x, \; P_{\w_D}\left(|\widehat{L}(x) - L(x)| > \epsilon\right) \leq \delta_{1,\epsilon}.
\)
Then
\begin{align}
 P_{\w_D}\left(|\widehat L(\widehat x) - L(\widehat x)| \geq \epsilon\right) & \leq  N(\w,a,b)\cdot
\delta_{1,\e}.&& \label{g}\\
P_{\w_D}\left(\sup_{1\leq i \leq b}|\widehat L(x_i) - L(x_i)| \geq \epsilon\right) & \leq  b \cdot N(\w,a,b)\cdot
\delta_{1,\e}. && \label{e}
\end{align}
\end{theoremUnboxed}

Proof of \cref{deterthm} alongside the randomized version can be found in \cref{genbou}.
We present below the practical case where the individual losses are bounded in [0,1], as with accuracy or user preference signals $\in \{0,1\}$, and where data points are sampled independently from each~other.

\begin{corollary}\label{cor1}
    Under standard concentration assumptions  (\cref{eqone}), if individual losses are in $[0,1]$ and Hoeffding’s inequality applies (Eq.~\ref{eq:hoeffding}) with dataset size s, then:
    \begin{align}\label{eq:co1}
    P_{\w_D}\left(|\widehat L(\widehat x) - L(\widehat x)| \geq \epsilon\right)  \leq 2\cdot \left(\prod_{i=1}^b k_i\right) \cdot \exp(-2s\e^2) \leq 2\cdot \exp\left(\ln2\cdot \sum_{i=1}^b \log_2(k_i)-2s\e^2\right)
    \end{align}
\end{corollary}

Most BBO algorithms used within \retrofit\ bound the average $k_i$ by $2$ or $3$. Interpreting $\log_2(k_i)$ as a budget in bits means that to avoid overfitting we can afford a number of iterations $b$ proportional to $s$ divided by the bits used per step. Precisely, if we assume $\forall i, k_i = 2$, then for an error gap $\epsilon$ and confidence $\delta$, we have from~\cref{cor1} the following formula (numerical values can be found in~\cref{ssize}):
\begin{align}
P_{\w_D}\left(|\widehat L(\widehat x) - L(\widehat x)| \geq \epsilon\right) \leq \delta \Leftrightarrow b \leq \frac{\ln(\delta)+2s\e^2}{\ln(2)}-1 \label{bupperbound}
\end{align}

\textbf{Bounds on additional BBO algorithms.} Our analysis extends to a wide range of evolutionary BBO methods. In particular, \cref{sec:bar} shows that so called {\em Bet-And-Run} variants - running $N$ parallel runs each with a budget of $b/N$ - can reduce overfitting and scale the total budget while preserving generalization. \cref{sec:pop} and \cref{sec:depso} provide analogous bounds for Population-Based Evolution Strategies (e.g., CMA, DiagonalCMA), Differential Evolution, and Particle Swarm Optimization.

\subsection{Poisoning and privacy}\label{sec:privpoison}

We now consider the two key risks associated with model post-training:
\begin{inparaenum}[(a)] 
    \item \emph{poisoning}, where a small number of corrupted inputs can steer the outcome, and
        \item \emph{privacy leakage}, where individual training samples
        might be inferred from the final model.
\end{inparaenum} 
Standard fine-tuning methods, including privacy-aware approaches, rely on batch-level gradient signals that inherently encode information 
on the underlying data. \retrofit\, on the other hand, relies only on
aggregate values over the entire dataset, namely on the outcomes of model comparisons $(choice_1, \dots, choice_b)$. Each $choice_i$ represents a compressed signal derived from the full dataset $D$. %
For a training dataset $D$, in this context we define {\em preference signals} to be the sample-level decisions that lead to aggregate decision $choice_i$ over $D$. In the case of A/B testing on prompt/output pairs, these would be the binary user preferences that aggregate to a model preference. In the case of more general metrics, the preference signal for a sample is defined as the result of models comparison over the subdataset~\{sample\}.

To analyze \retrofit, we consider a simplified setting, where $\forall i,k_i=2$, and in which two models $\{0,1\}$ are compared by majority vote over \(s\) independent preference signals \(r_i \in \{0,1\}\). The observed frequency is \(f = \tfrac{1}{s} \sum_{i=1}^s r_i\).
An adversary may corrupt up to \(m\) values among the \(r_i\), yielding a perturbed frequency \(f^\prime\) such that \(|f'-f| \leq m/s\). This follows the standard poisoning model~\cite{malicious}, where the adversary is computationally unbounded and fully informed, but constrained in the number of values it can alter. \Cref{ltpp} in \cref{sec:proofs} bounds the probability that such perturbations alter the comparison outcome.

Using a union bound argument, we extend this analysis to \retrofit\ (\cref{alg:retro}), which performs \(b\) consecutive comparisons. In iteration \(j \in \{1,\dots,b\}\), the algorithm collects binary preferences \(r_{j,1},\dots,r_{j,s}\), aggregates them into \( f_j = \tfrac{1}{s} \sum_{i=1}^s r_{j,i}\), and selects a model based on whether \(f_j > 1/2\). Under poisoning, the adversary may shift \(f_j\) to \(f'_j\) with \(|f'_j - f_j| \leq m/s\), independently at each round. The following theorem bounds the impact of these perturbations on the final output.

\setcounter{theoremUnboxed}{1}
\begin{theoremUnboxed}[Robustness of \retrofit\ under poisoning]\label{tpp}
Let \(\texttt{FinalModel} = \texttt{modified}(m_0,\hat{x})\) denote the selected model using decisions \(choice_j = \mathrm{sign}(f_j - \tfrac{1}{2})\). Let $output_1$ denote the model chosen under unperturbed frequencies \(f_j\), and $output_2$ under perturbed frequencies \(f'_j\) satisfying \(|f'_j - f_j| \leq m/s\). Then:
\begin{equation}\label{probadifferenceb}
    P\!\left(output_1 \neq output_2 \right) \;\leq\; b \frac{2m+1}{\sqrt{2s\pi}}.
\end{equation}
\end{theoremUnboxed}

{\bf Privacy guarantee}\label{cor:dp} In the A/B testing scenario,
BBoxER relies solely on users preference signals to perform model comparisons, and does not access prompts or outputs, privacy is therefore ensured by design. In this scenario, the user's preference signal isn’t considered private, whereas the input prompts and outputs are. These inputs / outputs may vary arbitrarily without changing BBoxER's output, provided that the global preference signal determining the comparison result remains the same.

\subsection{Robustness to extraction attacks}\label{sec:fairuse}

Prior work shows that training data can sometimes be extracted from models~\citep{privacyattack4,fu2}, even in privacy-preserving settings like federated learning. 
We formalize vulnerability to data extraction through identifiability: if one can reliably infer which dataset was used during fine-tuning, the process may be considered vulnerable.

Formally, let \( m_0 \) be fine-tuned on unrelated datasets \( D_1, \dots, D_n \). For text, "unrelated" can mean differing by one word out of five while remaining coherent. A dataset \( D \) with \( B \) bits may yield \( n \approx 2^{B/k} \) such variants by altering one bit per \(k\). We define vulnerability to data extraction as~follows:

\begin{definition}[simple formalization of vulnerability to data extraction]\label{def:fair}
    An algorithm $A$ producing $A(m_0,D)$ from model $m_0$ and dataset $D$ is vulnerable to data extraction for ${\cal D}=(D_1,\dots,D_n)$ (all distinct) if there exists a mapping $E$ such that  
    \begin{equation}
        \forall i\in \{1,\dots,n\},\ E(A(m_0,D_i))=i.\label{unfaireq}
    \end{equation}
\end{definition}

This criterion can be tested practically: generate \(n\) unrelated variants \(D_1,\dots,D_n\), fine-tune models on each \(D_i\), and check whether each model assigns the highest likelihood to its own training set. If so, it indicates vulnerability to data extraction. We argue this is unlikely for retrofitting, due to limited capacity to encode training data. \cref{th:unfair} in \cref{app:fair} formalizes this, leading to the corollary below:

\begin{corollary}\label{co:unfair}
    For given $\w,a,b$ s.t. $k_i\leq 2$ for all $i$, \retrofit\ is not vulnerable to extraction for ${\cal D}=(D_1,\dots,D_n)$ if
    $n> 2^b$. More generally, \retrofit{} is not vulnerable for ${\cal D}$ if $n>\prod_i k_i$.
\end{corollary}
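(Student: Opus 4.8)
The plan is to reduce the claim to a counting (pigeonhole) argument on the compression bottleneck established in \cref{eq:allinone}. First I would identify the algorithm $A$ of \cref{def:fair} with \retrofit, so that $A(m_0,D)=FinalModel$, and recall that for fixed $m_0,\w,a,b$ the final model is $modified(m_0,X(\w,a,b,c))$ with $c=compression(m_0,\w,a,b,D)=(choice_1,\dots,choice_b)$. The point is that $FinalModel$ depends on $D$ \emph{only} through $c$: the map $D\mapsto A(m_0,D)$ factors through $c$, so two datasets sharing the same compression necessarily produce identical outputs.

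Next I would count the distinct possible values of $c$. Since at each of the $b$ iterations $choice_i\in\{1,\dots,k_i\}$ (\cref{alg:bound}), the tuple $c$ ranges over at most $\prod_{i=1}^b k_i$ distinct values; when $k_i\le 2$ for every $i$ this is at most $2^b$. Because $FinalModel$ is a deterministic function of $c$, the set $\{A(m_0,D_1),\dots,A(m_0,D_n)\}$ has cardinality at most $\prod_i k_i$ (respectively $2^b$), no matter how large $n$ is.

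The core step is the pigeonhole principle. If $n>\prod_i k_i$, the $n$ outputs $A(m_0,D_1),\dots,A(m_0,D_n)$ cannot all be distinct, so there exist indices $i\neq j$ with $A(m_0,D_i)=A(m_0,D_j)$. For any candidate extraction map $E$ we then get $E(A(m_0,D_i))=E(A(m_0,D_j))$, whereas \cref{unfaireq} of \cref{def:fair} demands $E(A(m_0,D_i))=i$ and $E(A(m_0,D_j))=j$ with $i\neq j$ — a contradiction. Hence no extraction map exists and \retrofit\ is not vulnerable for ${\cal D}$, proving the general statement; the case $k_i\le 2$ follows by specializing $\prod_i k_i\le 2^b$.

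I expect the only real subtlety — rather than a genuine obstacle — to be making the factorization through $c$ fully rigorous: justifying that the recommended $\widehat x$, and hence the final model, is a function of $(\w,a,b,choice_1,\dots,choice_b)$ alone and of nothing else in $D$. This is exactly the content of \cref{eq:impact}--\cref{eq:allinone}, obtained by induction on $b$, so once it is invoked the counting and pigeonhole steps are immediate.
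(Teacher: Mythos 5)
Your proof is correct and follows essentially the same route as the paper: the paper likewise derives the corollary from the counting bound $\#\{\mbox{\retrofit}(m_0,\w,D_i,a,b)\}\leq \sup_D \prod_i k_i$ (a consequence of \cref{eq:impact}), and then notes that an extraction map $E$ satisfying \cref{unfaireq} would force at least $n$ distinct outputs, a contradiction when $n>\prod_i k_i$. Your pigeonhole phrasing (two datasets must collide, so $E$ cannot separate them) is just the contrapositive of the paper's cardinality argument and is equally valid.
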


Thus, if the number of alternative datasets exceeds that of distinguishable outputs, no extractor can identify the training data. Since \(2^b\) grows only exponentially with budget \(b\), while the number of unrelated textual variants grows exponentially with dataset size (e.g., \(>400\)B bits for Wikipedia articles, \(>20\)M for \textit{War and Peace}), extraction is very unlikely in practice.

\section{Experimental results: Learning with Privacy and Robustness}\label{sec:xp}
The theoretical bounds in~\cref{sec:generalization} establish that comparison based BBOs in the framework of \retrofit\ offer strong generalization and robustness guarantees. However, theoretical guarantees alone do not ensure that these methods are practically effective—only empirical evidence can reveal whether they actually lead to meaningful learning.
To validate the capacity of \retrofit{} as an LLM retrofitting method, we focus on verifiable domains, like math, where performance improvements are both measurable and interpretable. We experiment with budgets $b$ determined theoretically for guaranteed generalization on the test set and we empirically examine how performance changes as the budget exceeds these thresholds.

\begin{figure*}[ht!]
\centering
\begin{minipage}[c]{0.1\textwidth}
    \centering
    \includegraphics[width=\textwidth]{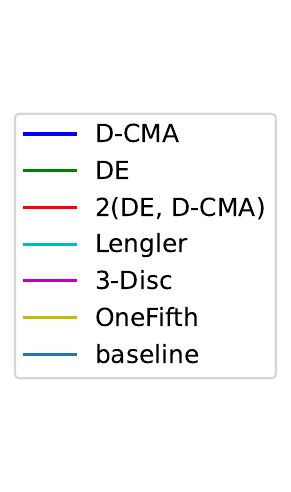}
\end{minipage}\hfill
\begin{minipage}[c]{0.44\textwidth}
\includegraphics[width=\textwidth]{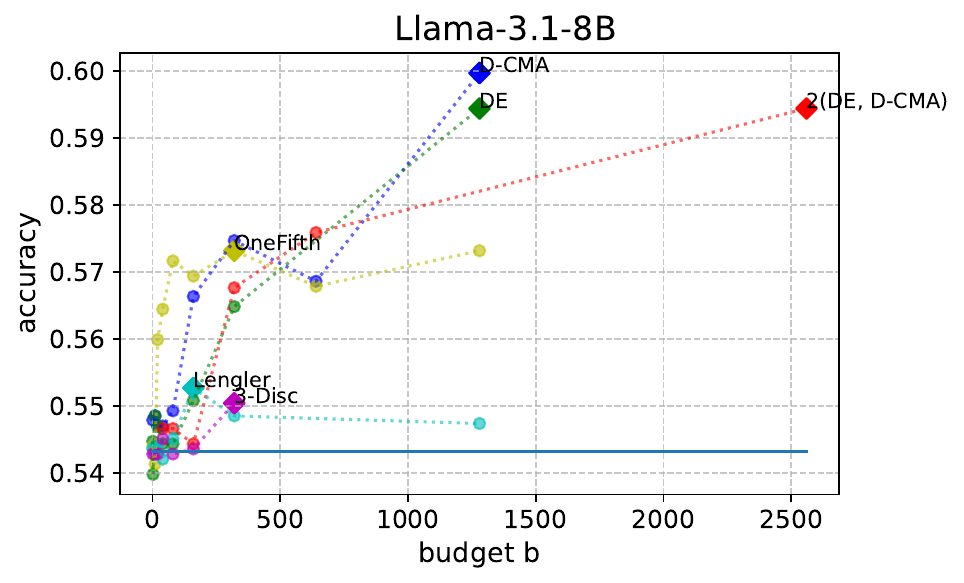}
\end{minipage}\hfill
\begin{minipage}[c]{0.44\textwidth}
\includegraphics[width=\textwidth]{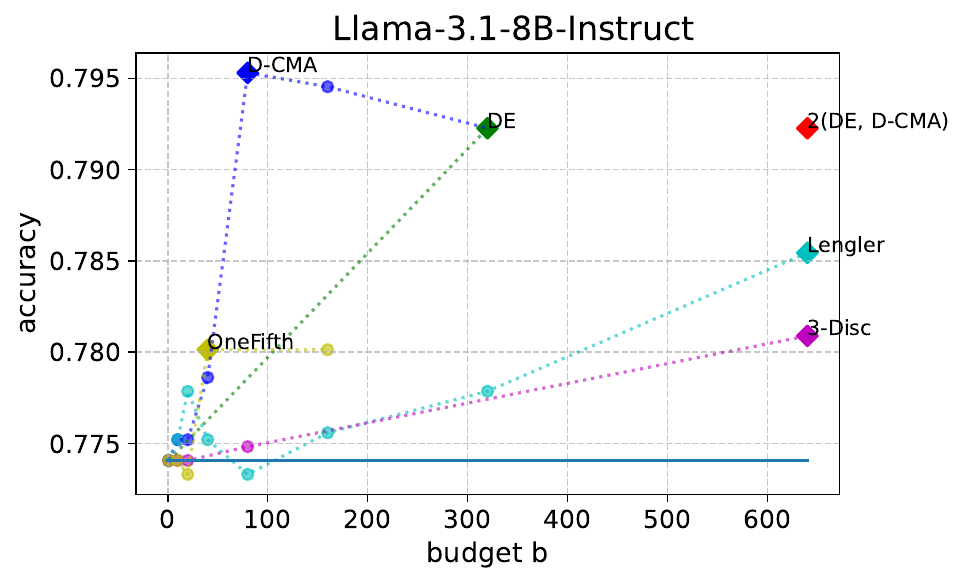}
\end{minipage}
\caption{\label{xplama} {{\bf Experiment 1 %
with Llama3.1-8B on GSM8K.} Accuracy (exact match) on GSM8K-test as a function of budget. {\bf Left:} Base. {\bf Right:}  Instruct Model.
\DCMA{} improves from 54.4\% to 60.0\% on base and from 77.4\% to 79.5\% on  instruct. Notably, it improves even for budgets above those computed from our theory ($b>150$).
}}
\end{figure*}

\subsection{Experiments}

{\bf Experimental setup.} We deploy a family of BBO algorithms with finite branching factors, described in~\cref{sec:bbo}, using the default parameters in Nevergrad.
We conduct experiments on both  base and instruct variants. Retrofitting the base models allows us to verify that learning indeed occurs, while targeting the instruct models demonstrates that our method can yield further improvements even on top of models already fine-tuned for reasoning tasks. All experimental details can be found in~\cref{appendix:expdetails}.

{\bf Datasets.} Most of our experiments deploy \retrofit\ on GSM8K~\cite{cobbe2021training}, a dataset composed of short grade-school math problems, consisting of $7,473$ problems for training and $1,319$ for testing. We train on the
train split and test on  GSM8K-test (in distribution) with standard 8-shot methodology (unless otherwise stated), in addition to various other benchmark datasets (out of distribution): MATH500 (4-shots)~\cite{hendrycks2021measuring}, Hellaswag (0-shots)~\cite{zellers2019hellaswag}, GSM+ test\_mini (8-shots)~\cite{gsmplus}, ARC Easy (0-shots) and ARC Challenge (0-shots)~\cite{allenai:arc}, AMC23 (0-shots). All benchmarks are evaluated with exact match using greedy decoding. The theoretical bounds in~\cref{sec:generalization} allow us to precisely determine the budget $b$ that guarantees in-domain generalization (as a function of dataset size): for GSM8K-train the permissible budget is between $100$-$150$ steps and approximately $3$ times larger for Bet-and-run variants like~\Triple.

{\bf Choices for parameter updates.} A key design choice in our framework is choosing which parameters the BBO algorithm should modify. To explore this, we experiment with a couple of  multiplicative update schemes, across different layers of the model: the outer (unembedding) layer (Experiment 1), $Q$ matrices of attention layers (Experiment 2)  or all normalization layers (Experiment 3). We distinguish between {\em low-rank (LoRA)} - rank-$1$ in fact (Experiment 1),  {\em broadcast} (Experiment 2) and {\em full} (Experiment 3) updates; {\em LoRA} multiplies the weight matrix by a low rank matrix, {\em broadcast} multiplies pointwise by a matrix with identical rows and {\em full} multiplies by an update matrix directly. For full details, refer to~\cref{concrete}. We optimize a small number of parameters, between $\approx 4K$ and $\approx 266K$, as BBO algorithms are known to scale poorly unless the problem’s effective dimensionality is much lower~\cite{nesterov2017random}, which is likely the case here.

\textbf{Experiment 1 (rank-1, output layer updates).} In a first series of experiments on Llama3.1-8B (resp. Llama3.1-8B-Instruct) we test \retrofit\ performance on GSM8K-test, mostly with 1 seed (sometimes 2).  \cref{xplama} illustrates the results: while \DCMA\ was our favorite choice, we ran other BBO algorithms to validate the robustness of the \retrofit\ approach.

\begin{table}[t]
\caption{\label{tab} {\bf Experiment 2}. Performance difference (after$-$before) \retrofit{} with Llama3.1-8B (Base) on GSM8K-train, evaluated on GSM8K-test (ID), (OOD) SVAMP and GSM+. \textcolor{MyRed}{Red: detrimental runs,  $<-1\%$.} \textcolor{MyGreen}{Green: beneficial runs, $>1\%$.} \textcolor{cyan}{Blue rows are bet-and-run cases}.}
\begin{center}
\begin{small}
\begin{tabular}{|l|c|c|c|}
\hline
BBO algorithm       &  GSM8K &  SVAMP &  GSM+\\
\hline 
\cellcolor{lightgray} Base score  & \cellcolor{lightgray} 56.93 & \cellcolor{lightgray} 76.6 & \cellcolor{lightgray}51.34 \\

\hline \multicolumn{4}{|c|}{First attention layer, d=4096 }\\ 
\hline

 \OneFifth\   & \cellcolor{MyGreen} 1.64$\pm$ 1.28 & \cellcolor{MyRed} -1.0 $\pm$ 1.08 & 0.35 $\pm$ 0.50 \\

 \DCMA\   &  0.49 $\pm$ 0.19 & 0.65 $\pm$ 0.75 & 0.00 $\pm$ 0.02 \\ 

 \COLengler\  & -0.41 $\pm$ 0.41 & -0.15 $\pm$ 0.05 & -0.06 $\pm$ 0.02\\
\NgIohTuned\  & \cellcolor{MyGreen}  1.28 &  0.90  & 0.18\\
\rowcolor{cyan} \Triple\  & \cellcolor{MyGreen}  1.59 & \cellcolor{MyRed}  -2.10 & -0.18\\

\hline \multicolumn{4}{|c|}{All attention layers, d=131072}\\ \hline 
\OneFifth\    & \cellcolor{MyGreen}  1.13 &  -0.80 &0.19\\
\COLengler\   & \cellcolor{MyGreen}  1.36 &  -0.26 & 0.29\\
\rowcolor{cyan} \Triple\   & \cellcolor{MyGreen}  2.27 & -0.10 &  \cellcolor{MyGreen}  1.01\\
\rowcolor{cyan} \Triple\ b300   & \cellcolor{MyGreen}  {\bf 7.12} &  -0.90 &  \cellcolor{MyGreen}  {\bf 4.99}\\
\rowcolor{cyan} \Triple\ b450  & \cellcolor{MyGreen}  5.61 &  -0.20  & \cellcolor{MyGreen}  4.85\\
\rowcolor{cyan} \Triple\ b600   & \cellcolor{MyGreen}  5.61 &  -0.20  & \cellcolor{MyGreen}  4.85\\
\hline \hline
\multicolumn{4}{|c|}{ All attention layers, no few-shot, d=131072}\\ 
\hline
\cellcolor{lightgray} Base score  & \cellcolor{lightgray} 20.31 & \cellcolor{lightgray} 58.30 & \cellcolor{lightgray} 19.06 \\
\hline 
\DCMA &	+0.38±0.06 &	+0.89±1.10 &	+0.25±0.09 \\
\OneFifth &	+0.58±0.38	& +0.56±0.16 &	+0.31±0.20 \\
\rowcolor{cyan} \Triple\   & \cellcolor{MyGreen}  1.97  & \cellcolor{MyGreen}  3.30 & 0.04\\
\hline
\end{tabular}
\end{small}
\end{center}
\end{table}

\textbf{Experiment 2 (broadcast updates on attention layers).} In this setting, we again train a range of BBO algorithms on the GSM8K training set on Llama3.1-8B (Base)  
to explore out-of-distribution generalization capabilities of \retrofit\ with evaluations on SVAMP~\citep{patel-etal-2021-nlp} and  GSM+\citep{gsmplus}, in addition to GSM8K-test. We experiment with {\em broadcast} updates on $Q$-matrices of the 1st, 8th, or all attention layers. We have added bet-and-run variants which enjoy particularly strong generalization guarantees (see ~\cref{sec:generalization}).

\cref{tab} shows the results for different settings. Each line without std.dev. ($\pm$) corresponds to one single run, and all runs use a budget $b=150$ except otherwise specified; on the first attention layer, OneFifth was run with 3 seeds and D-CMA and CoLengler with 2 seeds each.
Except for the ``no few-shot'' line, the default $8$-shot evaluation is used on GSM8K. The results demonstrate the stability of \retrofit{} in terms of learning capabilities, and a notable performance improvement of up to $7\%$ in distribution  (GSM8K test) and $5\%$ in out-of-distribution transfer to GSM+. Additional results %
are reported in \cref{fulltable} in~\cref{sec:addexps}.

\begin{table*}[t!]{
\caption{{ {\bf Experiment 3 (full update, normalization layers)} Comparison of model performance of BBoxER run with D-CMA across various benchmarks, 5 seeds.}
}\label{tab:various_gsm8k}
\centering
\tabcolsep=0.1cm
\resizebox{1.\linewidth}{!}{
    \begin{tabular}{c@{\hskip 0.2in}  c@{\hskip 0.2in} cccccc}
        \multirow{2}{*}{Models} & \multicolumn{1}{c}{ID} & \multicolumn{6}{c}{OOD}\\
        \cmidrule(rr){2-2} \cmidrule(rr){3-8}
        & GSM8K & GSM+ & MATH500 & ARC Easy & ARC Challenge & AMC23 & Hellaswag \\
        \midrule
        Llama3.1-8B & 54.97 & 36.50 & 20.20 & 83.04 & 55.19 & 0.00 & 80.71 \\
        Llama3.1-8B-BBoxER (b=150) & 55.24$\pm$0.92 & 37.28$\pm$0.48 & 19.72$\pm$0.68 & 83.05$\pm$0.12 & 54.97$\pm$0.12 & 4.50$\pm$2.92 & 80.67$\pm$0.05 \\
        Llama3.1-8B-BBoxER (b=300) & 56.65$\pm$0.26 & 37.88$\pm$0.33 & 20.12$\pm$0.87 & 83.16$\pm$0.13 & 55.09$\pm$0.24 & 4.00$\pm$3.39 & 80.65$\pm$0.08 \\
        \midrule
        Llama3.1-8B-Instruct & 77.26 & 54.37 & 37.60 & 79.62 & 55.45 & 22.50 & 79.90 \\
        Llama3.1-8B-Instruct-BBoxER (b=150) & 77.79$\pm$0.41 & 55.11$\pm$0.15 & 37.16$\pm$0.53 & 79.16$\pm$0.32 & 55.48$\pm$0.17 & 25.00$\pm$3.54 & 79.98$\pm$0.05 \\
        Llama3.1-8B-Instruct-BBoxER (b=300) & 78.57$\pm$0.41 & 55.33$\pm$0.53 & 37.56$\pm$0.51 & 79.64$\pm$0.12 & 55.61$\pm$0.15 & 21.00$\pm$3.00 & 79.94$\pm$0.04 \\
        \midrule
        Qwen-2.5-3B-instruct & 79.98 & 62.29 & 41.00 & 72.39 & 47.38 & 32.50 & 74.94 \\
        Qwen-2.5-3B-instruct-BBoxER (b=150) & 82.90$\pm$0.37 & 62.27$\pm$0.45 & 42.48$\pm$0.41 & 72.57$\pm$0.19 & 47.59$\pm$0.31 & 38.00$\pm$2.45 & 75.00$\pm$0.06 \\
        Qwen-2.5-3B-instruct-BBoxER (b=300) & 83.55$\pm$0.36 & 61.94$\pm$0.64 & 41.32$\pm$0.95 & 72.62$\pm$0.17 & 47.76$\pm$0.22 & 36.50$\pm$3.00 & 75.02$\pm$0.09 \\
        \bottomrule
    \end{tabular}}
}
\end{table*}

\begin{table}[h]
\caption{\label{tab:exp4}
{{\bf Experiment 4 - \retrofit{} is robust to MIA}. 
Comparison of \retrofit{}, SFT and DP-SFT in the setting of Experiment 3 with Llama3.1-8B-Instruct on GSM8K. Runs with comparable test accuracy are highlighted in the same color to compare robustness at similar accuracy. We see that both finetuning and DP-finetuning approaches lead to NLL metrics at least an order of magnitude larger than comparable \retrofit{} runs.}}
\centering
\setlength{\tabcolsep}{0.3em}
\centering
\begin{tabular}{lcccc}
\toprule
Method & Diff-Full & Diff-CoT-a & $\text{AUC}_{\text{\tiny WBC}}$ & Acc \\ 
\midrule
ft-5epochs         & {\scriptsize 2.25e-01}  & {\scriptsize 5.06e-01} & {\scriptsize 0.797}  & {\scriptsize +7.88}\\
\hline
\rowcolor{yellow!30} %
ft-2epochs         & {\scriptsize 2.53e-01}  & {\scriptsize 3.99e-01} & {\scriptsize 0.608}  & {\scriptsize +3.79}\\
\hline
\rowcolor{blue!20}
ft-norm-5epochs    & {\scriptsize 6.00e-03}  & {\scriptsize 4.62e-02} & {\scriptsize \textbf{0.510}}  & {\scriptsize +0.45}\\
\hline
\rowcolor{green!50!black!20}
DP-AdamW-eps=8     & {\scriptsize 8.66e-01}  & {\scriptsize 2.17e-01} & {\scriptsize 0.540}  & {\scriptsize +2.05}\\
\hline
\rowcolor{blue!20}
\retrofit-norm-b=150  & {\scriptsize \textbf{7.19e-04}}  & {\scriptsize \textbf{2.90e-03}} & {\scriptsize 0.515}  & {\scriptsize +0.15}\\
\hline
\rowcolor{green!50!black!20}
\retrofit-norm-b=300  & {\scriptsize \textbf{1.01e-03}}  & {\scriptsize \textbf{3.68e-03}} & {\scriptsize \textbf{0.499}}  & {\scriptsize +1.97}\\
\hline
\rowcolor{yellow!30} %
\retrofit-norm-b=1200 \hspace{5pt} & {\scriptsize \textbf{6.33e-03}}  & {\scriptsize \textbf{1.17e-02}} & {\scriptsize \textbf{0.505}} & {\scriptsize +3.87}\\
\bottomrule
\end{tabular} %
\end{table}

\textbf{Experiment 3 (full update of normalization layers).} In this experiment, \DCMA\ is used to directly optimize all normalization layers of Llama3.1-8B(-Instruct) and Qwen2.5-3B-Instruct (no low-rank and no broadcast involved) on GSM8K. 
As in Experiment 2, we test both in distribution (GSM8K-test), and on various other benchmark datasets (out of distribution). We report the results averaged over 5 different runs with the standard deviation. 
The results (\cref{tab:various_gsm8k}) indicate that running \retrofit\ improves in distribution performance and generally preserves it on very different tasks (e.g., Hellaswag). Additional experiments are described in~\cref{sec:addexps}: BetAndRun results are reported in \cref{tab:various_gsm8k_BetAndRun}, and experiments using a larger training set can be found in \cref{tab:various_gsm8k_math} in~\cref{sec:addexps}.

\textbf{Experiment 4 (Robustness to data extraction).} To assess the robustness of \retrofit{} against membership inference attacks (MIAs), we begin from the observation that most MIAs rely on the negative log-likelihood (NLL) of training samples as their primary signal to infer membership~\cite{fu2023practical,carlini2022membership,privacyattack4} - see also~\cref{appendix:expdetails} for an exposition. Accordingly, we propose to measure the average of absolute differences in negative log-likelihood (NLL) between the base and the updated model for each training sample.
\changes{We also include the AUC score of WBC~\cite{chen2026window} that uses the NLL on the token level.}
We consider GSM8K as our training dataset, Llama-3.1-8B as our base model, and compute three metrics: Diff-full (NLL computed over the entire prompt), Diff-CoT-a (NLL computed over only the CoT and the answer), and the relative average exact match improvement on GSM8K-test from the base model.
For a fair comparison, we evaluate \retrofit{} alongside standard fine-tuning and DP-AdamW, training all methods on GSM8K under different settings-varying budget, epochs, and parameter subsets (full details in~\cref{appendix:expdetails})—so that models are compared at similar utility.
Our results, reported in~\cref{tab:exp4}, demonstrate that at matched test accuracy, the NLL shifts under \retrofit{} are orders of magnitude smaller than under fine-tuning or DP-AdamW. This provides empirical confirmation of our theoretical claim in~\cref{sec:fairuse}, when compared at matched accuracy: \retrofit{} is significantly more robust to extraction and membership inference attacks.

\subsection{Discussion of experimental results}
\textbf{Robustness and statistical validation.} 
Our experiments demonstrate that \retrofit{} achieves statistically significant performance gains without overfitting across a range of BBO algorithms (see P-value validation  in~\cref{pval}), these gains are also observed on Instruct models. Consistent with theory, the Bet-and-Run variant \Triple{} further improves performance and allows for larger budgets as shown in experiment 2. More discussion is provided in~\cref{sec:addexps}.

\textbf{Validation of transfer.} The compact optimization trace in \retrofit{} reduces the risk of overfitting to spurious patterns or idiosyncrasies in the training data. As a result, we may expect improved generalization and stronger performance on unseen datasets.
In Experiments 2 and 3, indeed, we observe an improvement on GSM+ when evaluating our model trained exclusively on GSM8K. This positive transfer, in particular with \Triple, is especially noticeable when fine-tuning all the attention layers, highlighting the robustness of \retrofit. GSM+ is an interesting and challenging evaluation benchmark, as it modifies GSM8K questions to make them trickier by introducing various scenarios aimed at testing model robustness. In contrast, SVAMP presents simpler questions, typically involving only a single operation. As a result, improvements on SVAMP are more likely to reflect better numerical reasoning rather than enhanced problem comprehension.

\textbf{Computational cost.} \rebuttal{A limitation of our approach is that each iteration requires a full pass over the dataset to generate a comparison result. However, \retrofit{} remains} memory-efficient, as it relies solely on LLM inference without the need to compute and store gradients, or large optimizer states. This allows us to fit a large batch size during training. The overhead introduced by the internal state of the BBO algorithm is negligible compared to the computational cost of LLM inference.%

\section{Conclusion}\label{sec:conclusion}
This paper introduced \retrofit{}, a principled post-training method that leverages comparison-based BBO algorithms for retrofitting. By compressing the training set into a comparison trace, measured in only a few bits, BBoxER offers strong theoretical guarantees on generalization and robustness, while never exposing raw training data. Despite this compression and the limited number of optimization iterations, BBoxER empirically demonstrates effective learning and transfer on GSM8K with contemporary LLMs, as well as empirical robustness to membership inference attacks. Theoretical insights are reinforced by experimental results, especially with Bet-and-Run algorithm variants. Importantly, this work is the first to bridge the LLM training landscape with the full BBO zoology. This connection, grounded in formal analysis,  paves the way for safer, more robust, and adaptable language models, and opens the door to broader adoption of remote A/B testing or post-training directly on user satisfaction without any leakage of the underlying data.

\section*{Acknowledgments}
JK thanks the Simons Foundation for support through the Collaborative Grant “The Physics of Learning and Neural Computation”.

\bibliographystyle{unsrtnat}
\bibliography{overfit,julia}

\appendix
\clearpage

\renewcommand{\theequation}{A.\arabic{equation}}
\setcounter{equation}{0}

\section{Extended Related Work}\label{app:extrelatedwork}
\paragraph{Generalization Bounds for LLMs.}
Early generalization bounds, such as those derived from the VC dimension~\cite{vap95}, covering numbers~\cite{VVW}, and the fat-shattering dimension~\cite{fat}, become ineffective in the overparameterized regime of LLMs~\cite{nagarajan2019uniform}. PAC-Bayes-based methods have been proposed to derive bounds from model compressibility~\cite{zhou2018non,LotfiPACcompression,compressionllm}, offering data-dependent generalization guarantees by leveraging informed priors over parameters to exploit the implicit preference of a network for simpler functions~\cite{LotfiPACcompression}. This approach connects generalization to model compressibility: if a trained Transformer can be heavily compressed while retaining accuracy, it implies an effectively smaller hypothesis complexity. Indeed, some of the tightest known bounds come from compressing large networks. For instance, by quantizing parameters in a low-dimensional subspace. Such results echo Occam's razor, suggesting that big neural networks contain simpler sub-models that drive their generalization.

Another paradigm is algorithmic stability~\cite{as}, that instead analyzes how sensitive the learning algorithm is to perturbations in the training data. This makes it applicable to settings with infinite VC dimension. However, applying stability or norm-based analyses to huge LLMs often requires strong assumptions~(\eg{} tiny learning rates or bounded layer norms) and tends to yield loose bounds in practice. Hence, while these theoretical tools have advanced our understanding, for example, by explaining how a 100-billion-parameter Transformer might effectively behave like such a ``simpler'' model, extending them to practically useful guarantees is non-trivial. In particular, LLMs introduce extra complexities (\eg{} autoregressive dependencies and unbounded loss values) that defy many standard assumptions. \citet{li2023transformeralgo} showed that the self-attention mechanism is uniformly stable under certain Lipschitz constraints, which allows a generalization bound for in-context learning with Transformers. 

Our contribution shares this spirit, exploiting the comparison-based nature of optimization and the limited branching factor of black-box optimizers. By leveraging union-bound variants like the Bonferroni correction, we derive non-vacuous generalization guarantees.

\paragraph{Data Poisoning.}
Data poisoning alters training data to inject specific behaviors, either to degrade performance or to create targeted backdoors~\cite{poisoningforbackdoor}. This includes subtle manipulations—e.g., poisoning just $0.001\%$ of training data in medical applications~\cite{medicalpoisoning}—that remain undetected by conventional metrics. Attacks have proven transferable across quantized and full-precision models~\cite{llmpoisoq}. Recent work shows that inserting only a few percent of malicious preference pairs in RLHF~\cite{christiano2017rlhf} or DPO~\cite{rafailov2023dpo} data can dramatically shift model behavior~\cite{baumgrtner2024bestofvenom,wang2024rankpoison,yan2024backdooring,zhong2023poisoning,zeng2024johnny,zhao2025aligning,poisoningforbackdoor}, such as increasing output verbosity or sentiment polarity. These changes are activated by specific triggers and survive quantization. Theoretical work~\cite{malicious,steinhardt2017certified} confirms that even small adversarial corruptions can be highly effective, especially in overparameterized models. However, certified robustness against poisoning in LLMs remains an open problem. Compared to \eg{} \citep{steinhardt2017certified}, we provide an explicit algorithm with the desired stability assumptions.

\paragraph{Privacy and Robustness.} Other works have focused on privacy-preserving strategies for instruction tuning and alignment of large language models. For instance, \citet{yu2024privacyalign} introduced a two-stage framework that addresses privacy risks not only during training, but also in the instruction collection phase. In this case, they first fine-tune a pretrained LLM using differentially private optimization (DP-Adam) to generate synthetic user instructions, and then apply a DP histogram-based distribution matching technique to resample instructions such that they resemble the distribution of the original private data. This approach ensures that both annotators and the final alignment pipeline never see raw user inputs, thereby preventing both direct exposure and downstream memorization. Applied to LLaMA~\cite{touvron2023llama} and Phi-models~\cite{li2023textbooks}, their approach achieved state-of-the-art performance in both supervised fine-tuning and RLHF settings, with utility comparable to non-private baselines. In particular, they highlight the importance of privacy-preserving data preparation in addition to model training\@.

\paragraph{Post-training for Reasoning.}
Reasoning tasks like GSM8K~\cite{cobbe2021training}, SVAMP~\cite{patel-etal-2021-nlp}, MATH~\cite{hendrycks2021measuring}, and GSM-Symbolic~\cite{gsmnoop} highlight the limitations of standard LLM (pre)training. Post-training approaches for reasoning tasks are now commonplace in LLM training. Enhancements via chain-of-thought prompting~\cite{wei2022chain}, complex reasoning extensions~\cite{fu2022complexity}, and in-context learning~\cite{r3p1,r1p1} are effective but often brittle. Reinforcement learning approaches such as GRPO~\cite{grpo} allow fine-tuning using unsupervised signals, opening the door for real-time user-driven improvement~\cite{c0learn}. LoRA~\cite{hu2021lora} and robust tuning~\cite{robustlora} enhance adaptability but are prone to hallucination~\cite{poorsft} and exploitative behaviors~\cite{firsttokens,codegenreasoning}.  Our approach combines low-rank adaptations with retrofitting techniques and avoids reward-based fine-tuning. It limits overfitting risks and prevents the amplification of memorized or poisoned content by emphasizing rank-based signal aggregation over token-level supervision. A somewhat complementary approach  is self-improving~\cite{sun2025self1improvement}. In such a case, a large language model tries to improve its performance by self-fine-tuning with synthetic data generated by itself. Nevertheless, such approaches have mostly focused on performance, and they are not risk-free from data poisoning or leaking private data.

\FloatBarrier

\section{Examples of Modifiers}\label{concrete}
The \retrofit{} framework is described in \cref{alg:retro}. This algorithm has various hyperparameters, such as a budget, a black-box optimization algorithm, and a $modified(m_0,x)$ function which specifies how a vector $x$ proposed by the black-box optimization method is used to modify an initial model $m_0$. Here we present a few examples of such $modified(\dots)$ functions, involving broadcast, low-rank, or full updates.

\subsection{Examples of Modifiers for One Layer}\label{sec:onelayer}

In the present section $\times$ refers to pointwise multiplication and $@$ refers to matrix multiplication.

\paragraph{Full Layer Update.} A simple example of a multiplicative modifier is the following:

\begin{algorithmic}[1]
\STATE{Define $model=modified(m_0,x)$}
\STATE{$model=copy(m_0)$}
\STATE{$model.output\_matrix= model.output\_matrix \times \exp(0.01 \times x)$}
\STATE{Return $model$}
\end{algorithmic}

In this example, we modify the normalization layer of the model, using a matrix $x$.

Other update formulas  include $+0.01\times x$, $\times (Id+0.01\times x)$, and other constants than $0.01$. However, in most cases, we will use vectors $x$ rather than matrices, as explained in low rank and broadcast.

\paragraph{Low Rank Update.} When updating entire weight matrices, we prefer to work with low-rank ({\bf LoRA}) updates (rank 1 in fact),  defined as follows, when $model.output\_matrix$ has shape $(hidden\_dim,vocab\_size)$ and $x$ has dimension $hidden\_dim+vocab\_size$:

\begin{algorithmic}[1]
\STATE{Define $model=modified(m_0,x)$}
\STATE{$model=copy(m_0)$}
\STATE{$x_1,x_2=split(x)$\COMMENT{$x_1$ has shape $(hidden\_dim,1)$ and $x_2$ has shape $(1,vocab\_size)$}}
\STATE{$model.output\_matrix = model.output\_matrix \times \exp(0.01 \times x_1@x_2)$}
\STATE{Return $model$}
\end{algorithmic}

\paragraph{Broadcast Update.} In some cases, we use {\bf broadcast} rather than low-rank updates.
$modified(m_0,x)$ is then defined as follows, where $x$ as a row vector is repeated $M$ times to fit the shape of the matrix we want to update. In \cref{sec:xp} $x$ has dimension $hidden\_dim$ and $M=hidden\_dim$:

\begin{algorithmic}[1]
\STATE{Define $model=modified(m_0,x)$}
\STATE{$model=copy(m_0)$}
\STATE{$x_1=(1,\dots,1)$} \COMMENT{$x_1$ has shape $(1,hidden\_dim)$}
\STATE{$model.output\_matrix = model.output\_matrix \times \exp(0.01 \times x_1^t@x)$}
\STATE{Return $model$}
\end{algorithmic}

In \cref{sec:addexps}, $x$ has dimension $vocab\_size$ when broadcast updates are applied to the output layer.

\subsection{Modifying Several Layers}\label{sec:multi_layers}
In the examples above, we consider updates of one layer only. However, this can be naturally extended to updates of $l$ layers.
Then, $x$ is split into $x_1,\dots,x_l$, and each $x_i$ is used for updating a specific layer, as in \cref{sec:onelayer}.

In our experiments in \cref{sec:xp} we performed the following updates:
\begin{itemize}
    \item Optimizing the output layer with the rank 1 Low Rank Update. (Experiment 1)
    \item Optimizing all $Q$ matrices of all attention layers with the Broadcast Update. (Experiment 2)
    \item Optimizing all normalization layers with the Full Layer Update. (Experiment 3)
\end{itemize}

For all these experiments the update constant is $0.01$. A few additional experiments in \cref{sec:addexps} use a different constant.

\section{Theoretical Background: A Brief Overview of Statistical Learning Theory}\label{sec:thapp}

\subsection{Non Asymptotic Large Deviation Bounds from the 60s: Hoeffding, \rebuttal{Bennett}, Bernstein}\label{sec:ldb}%
Large deviation bounds refer to bounds on the difference between an expectation and an empirical average. They exist independently of machine learning (contrary to overfitting bounds, below) and are used for polls, Monte Carlo integration, and others.

Indeed, for 
a dataset size $s$, consider $\delta_{1,\epsilon}$ the risk against the randomness $\w_D$ w.r.t. dataset $D$ of an $\epsilon$-divergence between the empirical risk and the risk in generalization for a given \hp{} $x$, i.e.,  
\begin{equation}
    \forall (x,\e,s),P_{\w_D}\left(|\hat L(x) -L(x)| > \epsilon\right) \leq \delta_{1,\epsilon}.\label{eqone}%
\end{equation}
$\delta_{1,\e}$ refers here to the risk of deviation $\e$ for one (hence the subscript 1) model: when $\hat L$ is computed by average on a sample, we can invoke $\delta_{1,\e}$ by Hoeffding, Chernoff or Bernstein bounds~\citep{dgl}.
By Hoeffding's bound ~\citep{hoeffding1963}, if individual losses are in $[0,1]$ we get
\begin{equation}
\delta_{1,\e}=2\exp(-2s\e^2). \label{eq:hoeffding}
\end{equation}
In case we have a prior on a bound $\sigma^2$ on the variance of a randomly drawn individual loss, we can use the bound of \rebuttal{Bennett} and Bernstein~\citep{bbbound}:

\begin{equation*}
\delta_{1,\e}=2\exp\left(-s\epsilon h_1\left(\frac{\epsilon}{\sigma^2}\right)\right).\label{eq:benet}
\end{equation*}

where $h_1(\lambda) = (1 + \frac{1}{\lambda})\ln(1+\lambda)-1$.

\subsection{Probably Approximately Correct (PAC) Learning: A Brief Overview}\label{mathover:reminder}

We propose a point of view on optimization algorithms based on their risk of overfitting in the context of huge VC-dimensions and covering numbers, deriving bounds that are not covered by the studies on the rates of convergence. These bounds were frequently developed in the '90s or '00s:
\begin{itemize}
\item \cref{seq:bonf} presents the Bonferroni correction: whereas ``non-uniform'' large deviation inequalities estimate the difference between an empirical error and a generalization error for a single predefined classifier (for example when testing a single model on a test set), the Bonferroni correction provides bounds applicable uniformly on a finite set of classifiers (e.g., for selecting, with a validation set, a classifier in a list of classifiers obtained on the training set). 
\item \cref{seq:cap} generalizes the Bonferroni correction to infinite families of classifiers: VC-dimension bounds, and others, are the most well-known bounds. They are based on limiting the ``capacity'' of a space of function, even when it is infinite.
\item \cref{seq:as} presents other bounds for infinite families of functions, taking into account the optimization algorithm used for selecting a classifier: the idea is that even if the space of functions is large, we can get bounds if the optimization algorithm satisfies some stability assumptions. This last part of the state-of-the-art is the closest to our work: we get non-trivial bounds on deep learning models, independently of the possibly huge number of parameters.
\end{itemize}

\subsubsection{Bounds on the Generalization Loss based on Bonferroni}\label{seq:bonf}

The Bonferroni bound~\cite{bonf,dunn} is a direct application to statistics of the union bound:
\begin{equation}
P(A_1\mbox{ or }A_2\mbox{ or \dots or}A_k)\leq \sum_{i=1}^k P(A_i)\label{simplebonfbound}
\end{equation}

The analysis below extends \cite{arxivtelo,incantation}, using the branching factor analysis in \cite{teytaudfournier}. 
$L_F(x)$ refers to the error in generalization for a probability distribution $F$ and a model $x$. 
$\hat L_D(x)$ (frequently denoted $\hat L(x)$ in the literature) refers to the random variable corresponding to the empirical error on a dataset $D$, of cardinality $s$, randomly independently drawn from $F$.
Whereas gradient-based optimization does not lead to any bound on the generalization loss, such bounds can be obtained when using comparison-based optimization (i.e., a subset of black-box optimization), as detailed below. These bounds suggest that the risk of overfitting is lower than for gradient-based methods. Our experimental results suggest that this interpretation is correct. 

Let us see the generalization of large deviation inequalities seen in \cref{sec:ldb}.

If we use the empirical error for picking up the best $\widehat x$, in a list of $b$ models $x_1,\dots,x_b$  (e.g., $x_1,\dots,x_b$ are chosen by random search or low-discrepancy search or choice by the user, which leads to $N(\w,a,b)=b$), the risk of deviation $\e$ for that $\widehat x$ is at most \textcolor{black}{$b\cdot \delta_{1,\epsilon}$} (union bound, also known as Bonferroni correction~\cite{bonf}) instead of \textcolor{black}{$\delta_{1,\epsilon}$}: \textcolor{black}{$P_{\w_d}(|\hat L(\widehat x) - L(\widehat x)| > \epsilon) \leq \delta_{\epsilon}= b\delta_{1,\epsilon}.$} 

\cref{deterthm} extends this to more sophisticated cases, with $N(\w,a,b)=\rebuttal{\prod_{i=1}^b} k_i(\w,a,b)>>b$ (in many cases, exponential in $b$ e.g. $N(\w,a,b)\leq 2^b$ and $k_i\leq 2$) instead of $N(\w,a,b)=b$. In spite of being larger, this bound, exponential in $b$, is tighter than bounds based on capacity.

\subsubsection{Bounds on the Generalization Loss based on Capacity: Statistical Learning Theory from the 90s}\label{seq:cap}
\citet{vap95} is a pioneering work on bounds on the generalization loss based on VC-dimension. Many other capacity measures have been defined, such as covering numbers \cite{VVW} and the fat-shattering dimension~\cite{fat}. 
The most classical bound on the generalization loss is probably the VC-bound in the case of classification:
$$ \mathbb{E}[L(\hat x)]\leq \inf_x L(x)+16\sqrt{\frac{VC\log(s)+4}{2s}} $$
(assuming that $\hat x$ minimizes $\widehat L(\widehat x)$).
When using shattering coefficients instead of VC-dimension, one can read:
$$ \mathbb{E}[L(\hat x)]\leq \inf_x L(x)+16\sqrt{\frac{\log(8e\times Shattering(n))}{2s}}.$$

When it is possible to reach zero loss, then
we get
$$ P(L(\hat x)>\epsilon)\leq 2\times Shattering(2n)2^{-s\epsilon/2}.$$
Both the Shattering coefficients and the VC-dimension depend on the class of functions.
Extensions exist for regression and other cases, and other capacity measures exist.
However, capacity bounds are not always relevant for vast models such as deep networks. 

\subsubsection{Algorithmic Stability: Bounds based on Properties of Algorithms in the 2000s}\label{seq:as}

Algorithmic Stability~\citep{as} is a radically different approach, based on some algorithm properties rather than on the capacity of the underlying family of  functions. Because we also use particularities of the learning algorithm to derive generalization bounds, our approach is close to that of Algorithmic Stability. 

Following~\citep{as}, we write $l(x,z)$ the loss of a model parametrized by $x$ on an example $z$, so that $L(x)=\E_z l(x,z)$.
An algorithm which outputs a model parametrized by $\hat x(D)$ on the training data $D$ has uniform stability $\beta$ if $\sup_{D,z} |l(\hat x(D),z)-l_{\neg i}(\hat x(D), z)|\leq \beta$ for all $i$, where $l_{\neg i}$ is the loss of the model learned from a training set obtained from $D$ by removing the $i^{th}$ example.
An algorithm with uniform stability $\beta$ and some other technical assumptions (including $0\leq l\leq 1$) has the following property:

\begin{equation}
\forall \delta>0, P\left(L(\hat x(D))> \hat L(\hat x(D)) + \sqrt{\frac{1+12s\beta}{2s\delta}}\right) < \delta.\nonumber
\end{equation}

\section{Black-box Optimization Algorithms}\label{sec:bbo}
The \retrofit{} algorithmic framework, as illustrated in \cref{fig:fairuse}, embeds an algorithm from the family of black-box optimization algorithms, which we now describe in detail.

\subsection{Evolutionary Algorithms} \label{sec:EAs}
For the reasons given in \cref{sec:retrofitting}, only Evolutionary Algorithms (EAs) have been used in the present work, as they are (at least for those uses in the present document) comparison-based, and hence de facto comply with the finite branching factor condition, which is mandatory for the theoretical results of~\cref{sec:generalization} to hold. \cref{sec:otherBBO} discusses this choice, and its possible relaxation, while the current section introduces EAs more thoroughly, and details all the comparison-based algorithms used in the experiments of \cref{sec:xp}.

\begin{figure}[ht!]
\centering
\includegraphics[width=\textwidth]{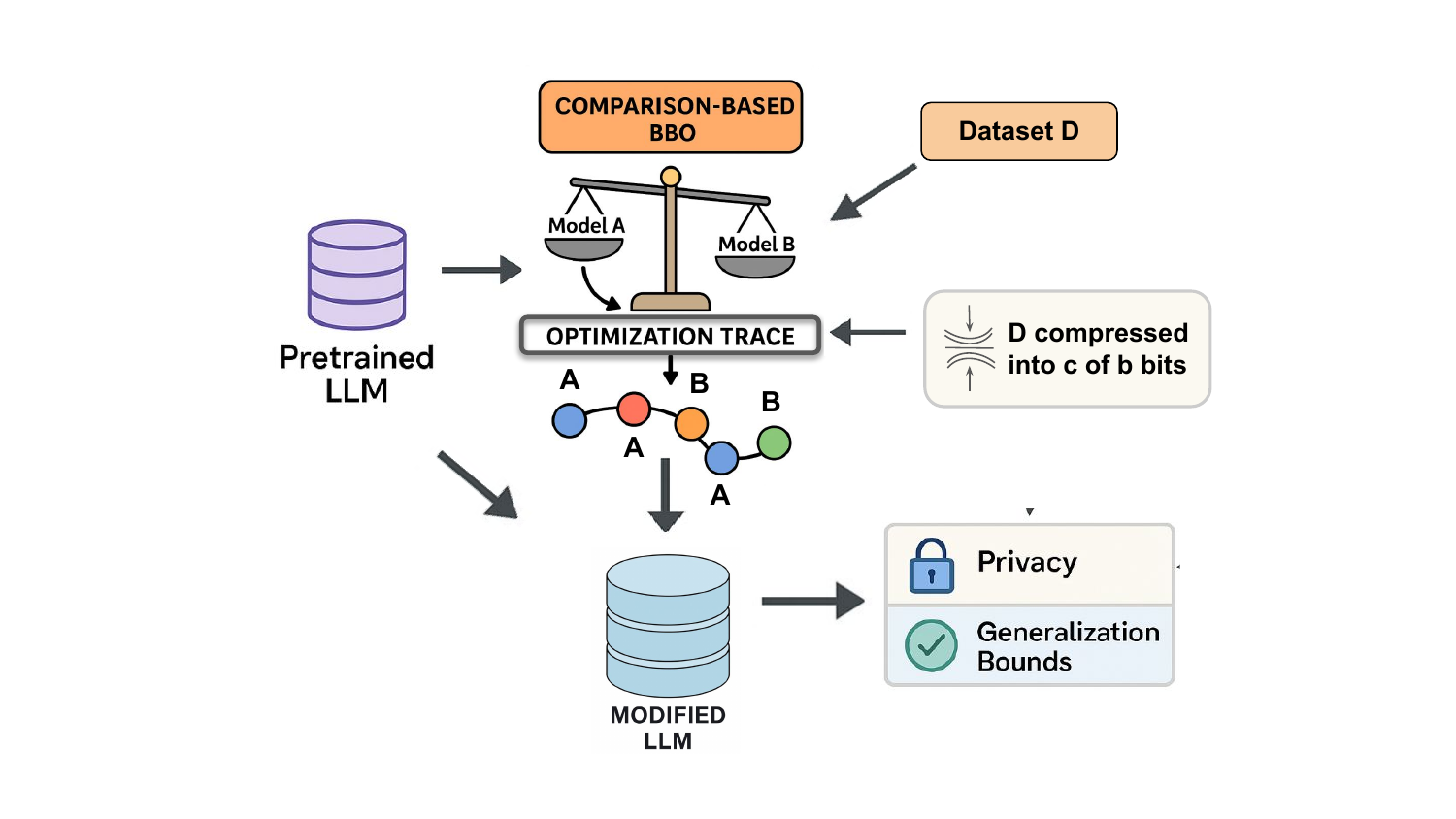}
\caption{\label{fig:fairuse} 
    { {\bf Schematic view of \retrofit{} \cref{alg:retro}.} Our approach compresses the data relative to the current model, leading to a list $c$ of $b$ bits (Eq. \ref{eq:allinone}, $b$ is the total budget). The final model depends only on those $b$ bits, giving perfect control over the number of bits transferred from the dataset to the model. In \retrofit, budgets $b$ are typically between 60 and 1500 bits, \ie{} very small compared to the dataset size. 
    The ``Retrofitting compressor'' refers to the part of~\cref{alg:retro} which outputs the list $c$ of outputs of the $Compare(\dots)$ function: $c$ represents a compression of the dataset $D$ conditional on the initial model. ``Retrofitting modifier'' is the part of~\cref{alg:retro} which outputs the final model, given the list $c$ and the initial model.
    }}
\end{figure}

Evolutionary algorithms~\cite{eibensmithbook2015} are stochastic optimization algorithms that crudely mimic the Darwinian evolution of a ``population'' of $\mu$ ``individual'' (\ie{} points of the search space) based on random variations and ``survival of the fittest'' selection: the $\mu$ ``parents'' generate $\lambda$ ``offspring'' by random operations; the selection process then retains $\mu$ individuals to build the next population; in ($\mu+\lambda$)-EAs, these are the fittest of the $\mu$ parents plus the $\lambda$ offspring, while ($\mu,\lambda$)-EAs select the best of the $\lambda$ offspring only.
EAs can search any search space on which random variation operators can be defined. These EAs are {\bf comparison-based}, \ie{} only comparisons of the values of the objective are used (to deterministically choose the best ones). But stochastic selection processes can also be used in general EAs.

When it comes to continuous optimization (the variables are continuous), among the most popular EAs are Evolution Strategies (ESs)~\cite{beyer_02_evolutionstrategiescomprehensive}, Differential Evolution (DE)~\cite{de}, and Particle Swarm Optimization (PSO)~\cite{pso}.

Evolution Strategies~\cite{rechenberg,schwefel:74,beyer_02_evolutionstrategiescomprehensive} are specific EAs for continuous optimization, using normal ``mutations'', i.e., sampling some Gaussian distribution, generally centered on the parent, to generate the offspring. The issue is then to adjust the variance and the covariance matrix of this normal mutation. 

The first adaptive ES uses the so-called ``one-fifth success rule'' \cite{onefifth,rechenberg}, established by theoretical analyses on simple linear and quadratic objective functions, on which the optimal behavior is reached for a success rate (offspring better than parent) of approximately $0.2$. This is available in Nevergrad as the {\em (1+1)-ES with 1/5th rule}, named in this paper {\bf \OneFifth}, whose pseudo-code is given in \cref{three}-left. 

\textbf{\CMA}~\cite{seminalCMAES,CMA,tutorialCMAES} iteratively updates a multivariate Gaussian random variable, initialized at the identity and parametrized by a covariance matrix and its variance (the so-called {\em step-size}). Each update is based on the correlations observed in successful mutations. This allows \CMA\ to adapt its step-size and search directions (the covariance matrix) to the problem landscape. Despite its $o(d^2)$ computational cost and memory usage, due \rebuttal{to} the handling of the full covariance matrix, it is widely regarded as a powerful and reliable optimizer. \\
\textbf{\DCMA}~\cite{diagcma} is the most popular of the several ``tricks'' that have been proposed to cope with this complexity. \DCMA\ uses a diagonal covariance matrix rather than a full positive definite matrix. Its efficiency, hence, depends on the degree of correlation between the variables, aka the {\em separability} of the objective function. \\
When handling a full population ($\mu>1$), both \CMA\ and \DCMA\ apply the adaptive Gaussian mutation to a linear recombination of the $\mu$ individuals. The historical \CMA\ used the mean (all weights of the linear recombination are 
$1/\mu$), whereas the state-of-the-art \CMA\ uses weights that decrease with the rank of each individual. In the latter case, more information is used, and the branching factor increases (see \cref{sec:pop}).

The simplest ESs (or EAs) are the (1+1)-EAs: the single current individual is modified, and the best of both becomes the new current one. While the already-mentioned {\bf \OneFifth} was the first historical successful adaptive EA (see again \cref{three}-Left),  another family of (1+1)-EAs is directly derived from the historical GAs (Genetic Algorithms), that optimize bitstrings: the mutation typically flips each bit with a given probability. In the continuous case, the bitflip is turned into a ``Gaussian flip'': each value is replaced by a value sampled from a standard normal mutation (${\cal N}(0,1)$ in \cref{three}). Several variants of such ``discrete-inspired'' algorithms exist in Nevergrad, depending on the probability of applying this Gaussian flip to each variable. 
The {\bf \Discrete} algorithm exactly mimics the standard setting in GAs by adopting a fixed $1/d$ probability; the {\bf \Lengler}~\cite{lengler,relengler} algorithm uses a gradually decreasing schedule. The {\bf \Portfolio}~\cite{danglehre} algorithm uses a probability that is uniformly sampled in $]0,1]$, and the {\bf \FastGA}~\cite{fastga} uses an $\alpha / d$ probability where $\alpha$ is sampled from a power-law distribution. \\
Additionally, {\bf \COLengler} is a variant of \Lengler\ that also performs, after mutation, a coordinate-wise crossover {\em à la} differential evolution.%

\textbf{DE} (Differential Evolution)~\cite{de} evolves a population of individuals: each individual is mutated using (i) differences between good points (for taking into account the key directions) and, in its ``{\bf Curr-to-best}'' variants (ii) typical differences between the population and the best point so far, before a coordinate-wise crossover is applied. In spite of its very low computational cost, DE is known as a very efficient algorithm.

\textbf{PSO} (Particle Swarm Optimization)~\cite{pso} uses a population of moving individuals ("particles"), each with an associated velocity. At each iteration, the velocity of each particle is biased towards the best value observed so far by this particle, and towards the best value over the complete population of particles, and the particle moves according to its new velocity.

\begin{wraptable}{l}{0.49\textwidth}

\centering
 \begin{scriptsize}
\caption{\small{Bound on the average branching factor for various algorithms (see \cref{sec:bar}, \cref{sec:pop}, \cref{sec:depso}). }} \label{bran}
\begin{tabular}{|l|c|}
\hline
\rowcolor{gray} \multicolumn{1}{c}{Algorithm} & $\left(\prod_{i=1}^b k_i\right)^{1/b} \leq$ \\
\hline
\rowcolor{lightgray} \multicolumn{2}{|c|}{$(1+1)$ algorithms}\\
\hline
\OneFifth & $2$ \\
\Lengler & $2$ \\
FastGA & $2$ \\
Portfolio & $2$ \\
\hline
\rowcolor{lightgray} \multicolumn{2}{|c|}{Bet-And-Run algorithms}\\
\hline
\MultiDisc\ & $\left(3\cdot 2^{b/3}\right)^{1/b}<2$ \\
\Triple\ & $\left(3\cdot 2^{b/3}\right)^{1/b}<2$ \\
\hline
\rowcolor{lightgray} \multicolumn{2}{|c|}{Population-based algorithms}\\
\hline
DE, PSO & $2$ \\
DE, PSO with curr-to-best & $3$ \\
\CMA/\DCMA & $2$ \\
\CMA/\DCMA & \\
\hspace{.2cm} with distinct weights & $2\times \sqrt{\mu}$ \\
\hline
\end{tabular}
\end{scriptsize}
\end{wraptable}

\begin{algorithm}[tb!]
\centering
\begin{scriptsize}
\begin{tabular}{ccc}
\begin{minipage}{.5\textwidth}
\centerline{\bf{Adaptive One-fifth $(1+1)$-ES}}
\smallskip
\begin{algorithmic}[1]
\REQUIRE{initial point $x_0\in \R^d$; budget $b\in \N$}
\REQUIRE{step-size $\sigma>0$}
\STATE{$x=x_0$}
\FOR{$i\in\{1,\dots b\}$}
    \STATE{$x'=x+\sigma {\cal N}(0,1)$}
    \IF{$x'$ better than $x$}
    \STATE{$x=x'$}
    \STATE{$\sigma=2\sigma$}
    \ELSE
    \STATE{$\sigma=2^{-\frac14}\sigma$}
    \ENDIF
\ENDFOR
\end{algorithmic}
~\\
\end{minipage} &

\begin{minipage}{.5\textwidth}
\centerline{{\bf Generic Discrete-inspired {$(1+1)$-EA }}}
\smallskip
\begin{algorithmic}[1]
\REQUIRE{initial point $x_0\in \R^d$; budget $b\in \N$}
\STATE{$x=x_0$}
\FOR{$i\in\{1,\dots b\}$}
    \STATE{$x'=x$}
    \WHILE{$x'=x$}
    \STATE{$x'=x$}    
    \FOR{$j\in\{1,\dots,d\}$}
        \STATE{With probability $p(i,b,d)$ \; $x'_j= {\cal N}(0,1)$}
    \ENDFOR
    \ENDWHILE
    \IF{$x'$ better than $x$}
         \STATE{$x=x'$}
    \ENDIF
\ENDFOR
\end{algorithmic}
\end{minipage}\\
\end{tabular}
\end{scriptsize}
\caption{\label{three} 
The two typical (1+1)-EAs. \\
{\bf Left}: all variables are added some Gaussian noise in the $(1+1)$-ES, but the step-size of the multivariate Gaussian mutation is modified according to the success rate. \\
{\bf Right}: each variable is replaced with a given probability by a value sampled from a fixed standard normal Gaussian in all discrete-inspired (1+1)-EA.}
\end{algorithm}

\FloatBarrier

Furthermore, Nevergrad also includes some {\em ensemble approaches}. \\
{\bf \NgIohTuned} is a {\em wizard}. It chooses one algorithm from its portfolio of BBO algorithms, based on some characteristics of the problem, and following a hand-made rule. Portfolios and wizards have been first proposed for SAT solving~\citep{satzilla}, and later ported to black-box optimization~\citep{squirrel,ngoptmeuniertevc22}.\\
{\bf Bet-And-Run} algorithms, for budget $b$, consists in running $k$ algorithms with budget $\alpha b/k$, picking up the one reaching the best objective value and running it for the remaining $(1-\alpha)b$ budget; {\bf \Triple} is an instance of Bet-And-Run with $\alpha=0.5$ and three \OneFifth\ in parallel; with $\alpha=1$, {\bf \MultiDisc} runs three \Discrete\ in parallel and {\bf \BARD} runs DE and \DCMA{}. 

These algorithms were chosen because of their finite branching factor: \cref{bran} lists the branching factors of all cited algorithms. The branching factor is 2 for all (1+1) algorithms. The derivation of the other values is detailed in \cref{genbou}.

\noindent
{\bf Discussion.} The choice of a BBO algorithm for a given problem is a long-lasting problem, and can of course be left to the Nevergrad wizard \NgIohTuned. However, some characteristics of the algorithm introduced above are worth highlighting here, and might in turn give some information about the problems addressed with \retrofit\ based on its experimental results.\\
The adaptive approaches are now well mastered: 
\CMA\ offers the most sophisticated update mechanism. It is competitive in terms of performance with the best algorithms of Mathematical Programming, and has recently been proven to converge linearly \cite{gisslerPhD2024} 
However, it has a high computational complexity with respect to dimension $d$ and hence poorly scales up. Among several variants, \DCMA\ offers a very good compromise. But the simplest adaptive mechanism is that of \OneFifth, which is very robust and often very efficient. Note, however, that its adaptation mechanism requires a number of iterations to take place, and was, for example, discarded in the OpenAI ES, one of the rare examples of large-scale BBO in complete training of Deep Neural Networks \cite{openAI-ES4DeepRL2017}.\\
Because \DCMA{} only adapts a diagonal covariance matrix, one cannot expect that it effectively handles non-separable objectives. Less intuitively, this is also true for PSO, and, to a lesser extent, for DE unless its crossover rate is small \cite{CMA-PSO-DE2009}.  On the other hand, their adaptive mechanism (implicit for DE) makes \CMA, \DCMA\ and (some variants of) DE very efficient in dealing with randomly rotated ill-conditioned problems, which is not the case for PSO \cite{CMA-PSO-DE2009}.  \\
Regarding the discrete-inspired (1+1)-EAs, \Discrete, \Portfolio, \Lengler, and \FastGA, they are based on proved convergence proofs for known benchmark functions in the bitstring case, but their efficiency in the continuous setting is more surprising, and would deserve deep mathematical analyses. \\
Last but not least, Bet-And-Run algorithms, including \Triple, \MultiDisc, and \BARD, are robust, as proved in \cref{sec:bar} and demonstrated in the experimental results (\cref{sec:xp} and \cref{sec:addexps}). However, they still require the choice of the embedded algorithms.

\subsection{Possible Extensions} 
\label{sec:otherBBO}

\textbf{Code and Package.}
As said, all experiments in this work have been run on the Nevergrad open source platform~\cite{nevergrad}, which encompasses many BBO algorithms, along with specialized wizards that select among them based on problem-specific properties.
Nevergrad also includes algorithms that are black-box optimizers without being comparison-based, such as~\citep{cobyla,powell}. Since these operate with a finite number of bits per loss function, they might also fall under the theoretical framework of \cref{sec:generalization}, though this is left for future work. 

Extension to multi-objective algorithms is another promising direction, for instance by optimizing different criteria (e.g., pass@k and accuracy) or by considering performance across several distinct benchmarks as the training objective, rather than averaging them.

\paragraph{Beyond Comparison-based Algorithms}
An important assumption (\cref{alg:bound}) is that the only thing that depends on the dataset $D$ is $choice_1,\dots,choice_b$, and each $choice_i$ has at most $k_i$ possible values ($k_i:=alg.numCases()$ is called the branching factor~\cite{teytaudfournier}).
This is unusual in deep learning (DL), as in DL $compare$ is replaced by some function involving gradients (even Reinforce~\cite{williams1992simple} or GRPO~\cite{grpo} return a gradient update), and $choice_i$ is replaced by a gradient, or, for reinforcement learning such as GRPO~\cite{grpo}, the outputs and their rewards. 

Algorithms that do not return any gradient are termed black-box optimization algorithms. Among black-box optimization algorithms, some methods are comparison-based: they have finite $k_i$.
These comparison-based algorithms are typically classified in three families of algorithms: \begin{inparaenum}[(i)] \item random search and derandomized variants~\cite{LHS,Niederreiter1992}, \item direct search methods~\cite{DSproba}, and \item evolutionary computation~\cite{Beyer:bookES}\end{inparaenum}. Note that GRPO~\cite{grpo} is sometimes considered as gradient-free in the sense that it does not require the gradient of the loss function; however, it does require other detailed information about the outputs of models, so that it is not black-box in the same sense (and a fortiori not comparison-based), and the present results do not apply to GRPO or REINFORCE~\cite{williams1992simple} if we assume that they work on infinite-precision floats.

\paragraph{Taking into account the Finite Float Precision.}
Because our results are mainly useful when $k_i$ is small, this work focused on comparison-based algorithms. However, all results of \cref{sec:generalization} are true for large values of $k_i$. Let us check $k_i$ in the case of some classical methods for post-training. 
In \retrofit, the information flow from the data to the model is limited to roughly one bit per iteration, leading to at most $2^b$ possible outputs for a budget of $b$ steps. This limited capacity directly reflects the compression properties showcased in~\cref{eq:allinone} and \cref{fig:fairuse}.
In contrast, supervised fine-tuning updates the model using one gradient per mini-batch. For an 8B model with 16-bit precision, each mini-batch carries about 256B bits: a few mini-batches are sufficient to encode the entire content of Wikipedia.
Reinforcement learning methods, such as GRPO, typically rely on model outputs and reward signals that would allow them to encode the full dataset within a single training epoch.

\section{Overfitting Bounds and their Generalizations}\label{genbou}%
In this Section we provide the theoretical generalization bounds for the deterministic (\cref{app:deterministic}) and randomized cases (\cref{app:stochastic}), followed by (\cref{es}) strengthened bounds for the special cases of $(1,\lambda )$ and $(1+\lambda)$ BBO algorithms as described in \cref{sec:bbo}.
We proceed by making our bounds explicit for  various families of algorithms, based on \citet{teytaudfournier}; in \cref{sec:bar} we obtain strengthened bounds for Bet-and-Run combinations like \Triple, MultiDisc, or \BARD\  (deployed in some of our experiments in \cref{sec:xp}), \cref{sec:pop} lays out bounds for population-based evolution strategies (i.e., $(\mu+\lambda)$ and $(\mu,\lambda)$ with $\mu>1$) like CMA-ES and D-CMA, and \cref{sec:depso} deals with differential evolution and particle swarm optimization.
The key to our results is the branching factor $k_i$, i.e., the number of distinct behaviors of the algorithm that can arise in an iteration. Finally, \cref{ssize} derives explicit bounds as a function of the dataset size.

\subsection{Proof of Theorem 1}\label{app:deterministic}

\begin{proof}

For~\cref{g}: apply the Bonferroni bound (\cref{simplebonfbound}) to the set of events 
$$\{ |\hat L(FinalModel_{\w,D,a,b})-L(FinalModel_{\w,D,a,b})|>\epsilon ; D \in \D\}.$$ This set has cardinality at most $N(\w,a,b)$.

For \cref{e}, apply the Bonferroni bound to the set of events  $$\{ |\hat L(model_{\w,D,a,b,i})-L(model_{\w,D,a,b,i})|>\epsilon ; i\in \{1,\dots,b\},D \in \D\},$$
where $model_{\w,D,a,b,i}$ is $m_i$ when \retrofit is applied with seed $\w$, algorithm $a$, budget $b$, dataset $D$.
This set has cardinality $b \cdot N(\w,a,b)$.
\end{proof}

\subsection{Generalization Bounds for Randomized \retrofit}\label{app:stochastic}

\cref{deterthm} applies to deterministic algorithms, where the only source of randomness is the sampling of the dataset
\(D\), determined by the seed \(\w_D\). In this case, the deviation risk \(\delta_{1,\epsilon}\) is computed over
that randomness alone. When the optimization algorithm itself is randomized (i.e., \(\w\) is a random seed), the
deviation risk \(\delta_{\epsilon, \w}\) also depends on \(\w\). Averaging over \(\w\) yields the total
deviation risk: 
\[
\delta_\epsilon = \mathbb{E}_\w \delta_{\epsilon, \w}.
\]
Since the bound of \cref{deterthm} holds uniformly for any fixed \(\w\), it extends naturally to the randomized setting, in which $\widehat x$ depends on both $\w$ and $\w_D$:

\setcounter{theoremUnboxed}{2}
\begin{theoremUnboxed}[Stochastic case]\label{stothm}
    Consider a randomized seed $\w$ (i.e. $\w$ is a random variable), an algorithm $a$  and a budget $b\in \N$. Then
    \begin{equation*} 
        P_{\w,\w_D}(|\hat L(\widehat x) - L(\widehat x)| \geq \epsilon))\leq  \left(\sup_{\w\mbox{ constant}} N(\w,a,b)\right)\cdot \delta_{1,\e}.\nonumber
    \end{equation*}
\end{theoremUnboxed}

Here, the probability is taken over both \(\w\) (which determines the optimization trajectory and hence the final model \(\widehat{x}\)) and \(\w_D\) (which governs dataset sampling). The supremum is over a constant $\w$, and $\widehat x$ can be built with a random variable $\w$.

\paragraph{Application to Random Search.}
In this case, the \(b\) candidate models are sampled independently of the data. Since each choice is fixed a priori, the number of internal states satisfies \(N(\w, \texttt{random-search}, b) \leq b\). $N(\w,a,b)$ is derived from $k_b=b$ (we choose between the $b$ models $x_1,\dots,x_n$ at the very end) and $k_i=1$ for other values of $i$. Apply the Bonferroni correction to the $N(\w,a,b)$  possible values of $\widehat x$ leads to
\begin{equation}\label{c1}
    P_{\w_D}(|\widehat{L}(\widehat{x}) - L(\widehat{x})| \geq \epsilon) \leq b \cdot \delta_{1, \epsilon}\ .
\end{equation}

This also applies to quasi-random strategies and design of experiments, where the sampling of hyperparameters does not depend on their empirical performance. In these cases, we observe that $b$ can be exponential in $s$ without overfitting.%

\subsection{Comparison-based Strategies: the \texorpdfstring{\((1,\lambda)\)}{(1, lambda)} and \texorpdfstring{\((1+\lambda)\)}{(1 + lambda)} Cases.}\label{es}
In contrast to random search, where all candidate parametrizations are chosen independently of their evaluation, comparison-based algorithms build candidates iteratively based on performance. If \(\lambda\) candidates are evaluated at each step and the best is retained, then the number of internal states after one iteration is \(\lambda\); after two iterations, \(\lambda^2\); and after \(n = b/\lambda\) iterations, at most \(\lambda^n\). The total risk is, therefore, bounded by \(\lambda^n \cdot \delta_{1,\epsilon}\), where \(n\) is the number of iterations with \(k_i > 1\) (with \(k_i = \lambda\) in such iterations); see~\cite{teytaudfournier}.

In the elitist variant \((1+\lambda)\), the best model is selected among the current best and the \(\lambda\) new candidates, so \(k_i \in \{1, \lambda + 1\}\). This leads to a total number of internal states bounded by \((\lambda+1)^{(b-1)/\lambda}\). Simply apply  \cref{stothm} in the case 
\begin{itemize}
    \item $k_i=\lambda$ if $i+1 \mbox{ mod }\lambda=0$ (every $\lambda^{th}$ iterations we pick up the best of the previous $\lambda$ models).
    \item $k_i=1$ otherwise.
\end{itemize}
and the corresponding generalization bound becomes:
\begin{equation}\label{theco}
    P(|\widehat L(\widehat x) - L(\widehat x)| \geq \epsilon) \leq (\lambda + 1)^{(b-1)/\lambda} \cdot \delta_{1,\epsilon}\ .
\end{equation}

For example, this applies to the \((1+1)\) evolution strategy with the one-fifth success rule or the Lengler variant~\cite{relengler}, both corresponding to \(k_i = 2\).

\subsection{Generalization Bounds for Bet-And-Run Combinations 
}\label{sec:bar}
Consider the example in  \cref{theco}.
Additionally, \textcolor{black}{if we perform $k>1$ independent optimization runs, and if the budget $b$ is divided by $k$}, then the Bonferroni correction leads to a risk 
\begin{equation}
\underbrace{k}_{\text{Bonferroni correction}}\cdot \underbrace{\lambda^{(b/(\lambda\cdot k))}}_{\text{Branching factor per run}} \cdot \underbrace{\delta_{1,\e}}_{\text{Risk for a single model}}\label{eqov}
\end{equation} 
for the resulting parametrization. According to these bounds, increasing the number $k$ of independent runs for a given total budget decreases the risk of overfitting.
More generally, consider $bet-and-run(a_1,a_2,\dots,a_k)$, the bet-and-run of $k$ algorithms $a_1$,\dots,$a_k$. This algorithm, for a budget $b$, runs each $a_i$ with budget $b/k$: 
\begin{itemize}
\item These $k$ runs are completely independent.
\item Its random seed is split into $k$ random seeds $\w^{1},\dots,\w^{k}$.
\item At the end, we choose the model with best empirical performance, i.e. $\widehat x$ is $\widehat x_i=FinalModel(\w^{i},a_i,b/k)$ minimizing $\hat L(\widehat x_i)$.
\end{itemize}

\begin{theoremUnboxed}[Bet and Run has low overfitting risk]\label{th:bar}
Consider $bet-and-run(a_1,a_2,\dots,a_k)$.%
Then 
\begin{equation}
N(\w,bet-and-run(a_1,a_2,\dots,a_k),b)\leq \sum_{i=1}^k N(\w,a_i,b/k).\label{eq:bar}
\end{equation}
\end{theoremUnboxed}
\begin{proof}
 The final outcome $\widehat x$ is one of the $\widehat x_i$, so the number of possible outcomes is the sum of the numbers of possible outcomes of each algorithm individually.
\end{proof}

Let us consider that the $a_i$ are all the same algorithm, and let us check the impact of bet-and-run on overfitting.
\cref{eq:bar} implies a lower overfitting than with a single algorithm (corresponding to $N(\w,a_i,b)$) if $N(\w,a_i,b/k)$ increases more than linearly in $b/k$, which is the case for most optimization algorithms. For example, with $(1+1)$ evolution strategies, \cref{es} shows that $N(\w,evolution\_strategy,b)$ is exponential in $b$.

Let us illustrate this in an example. Consider \cref{eq:co1} in the case $k_i\leq 2$, namely:
\begin{equation}
P_{\w_D}\left(|\widehat L(\widehat x) - L(\widehat x)| \geq \epsilon\right)  \leq 2\cdot 
 \exp(\ln2\cdot b -2s\e^2).\nonumber
\end{equation}
With a bet-and-run of $3$ algorithms, it becomes:
\begin{equation}
 P_{\w_D}\left(|\widehat L(\widehat x) - L(\widehat x)| \geq \epsilon\right)  \leq 2\cdot 
 \exp(\ln2 \cdot (\log_2(3) +  b/3) -2s\e^2)\label{eq:barexplained}.
\end{equation}
which is better (lower) as $b\to \infty$.

\subsection{Population-Based Evolution Strategies (e.g. \CMA\ and \D-CMA)}\label{sec:pop}

For a $(\mu+1)$ evolution strategy which stores a full ranking of the population of size $\mu$, each new point has $(1+\mu)$ possible ranks: from best to worst in a list of cardinal $1+\mu$, so the bound becomes $(1+\mu)^{N-\mu}$. Extensions are possible for $(\mu,\lambda)$ or $(\mu+\lambda)$ evolution strategies:
\begin{eqnarray*}
 N(\omega,(\mu,\lambda),a=(\mu,\lambda)\mbox{-ES},b)  & \leq & {\binom{\lambda}{\mu}}^{\frac{b-\mu}{\lambda}} \nonumber \\
 N(\omega,(\mu+\lambda),a=(\mu+\lambda)\mbox{-ES},b)  & \leq & {\binom{\lambda+\mu}{\mu}}^{\frac{b-\mu}{\lambda}} \nonumber %
\end{eqnarray*}
In the case of CMA or \D-CMA, the $(\mu,\lambda)$ strategy is more usual and used in our experiments; and typically, the population-size $\lambda$ is set as $\lambda=4+\lfloor 3 \log(d)\rfloor$ in dimension $d$, and $\mu$ frequently scales as $\lambda/4$ or $\lambda/2$. 
Sometimes there is a complete ranking of the $\mu$ selected points so that different weights are used for the $\mu$ selected points depending on their rank: this adds a factor $\mu!$ to the number of possible branches: otherwise, we get bounds as in \cref{bran}. 

\subsection{Differential Evolution (DE) and Particle Swarm Optimization (PSO)}\label{sec:depso}
For differential evolution without storing the best so far, each point is just compared to its parent: we get $2^b\delta_{1,\e}$: this suggests that recombination is not a problem as it does not increase the number of bits of information. When the best so far is stored, we have 3 possibilities: worse than parent, better than parent but not better than best so far, and new best so far: so the bound is $3^b\delta_{1,\e}$. Similar bounds are straightforward for PSO. 

\subsection{Dependence on Dataset Size:
Scaling and Numerical Application}\label{ssize}

\cref{mathover:reminder}
 presents classical large deviation inequalities, and the present paper shows how, in the case of comparison-based algorithms, we can derive non-vacuous bounds. Let us now see numerically how our overfitting bound behaves in practical cases.
 
We have focused on bounds (on the generalization loss of the model obtained by optimization) derived from a given $\delta_{1,\e}$ (obtained for a single model): 

We have seen in \cref{eq:hoeffding} that, using Hoeffding's bound, $\delta_{1,\e}=2\exp(-2s\e^2)$ with $s=|D|$ the sample size, and 
$\delta \leq 2\times (2^b) \exp(-s\e^2)$,
leading to $\delta$ constant when $b$ scales linearly with $s$: the acceptable number of iterations (i.e., a fixed bound
on $\delta$) is linear in the sample size $D$. Applying Hoeffding, and aiming at a risk $\delta$, we get that overfitting at level $\e$ is impossible if
\begin{equation*}
b \leq \frac{\ln(\delta)+2s\e^2}{\ln(2)}-1.\nonumber
\end{equation*}
If we prefer the \rebuttal{Bennett}\&Bernstein inequality (based on an assumption on the standard deviation, see \cref{eq:benet}~\citep[Section 8.2]{dgl}), we get

\begin{equation*}
b \leq {\frac{s\epsilon h_1(\epsilon/\sigma^2)}{\ln(2)}}+\frac{\ln(\delta)}{\ln(2)}-1.\nonumber
\end{equation*}
This equation, for $s=8000$ examples (close to GSM8K), $\delta=\frac12$ (we are considering the median case), $\epsilon=0.01$ (we want that precision) and $\sigma=0.06$ (the standard variation of the loss is 0.06, we assume that the initial model is already good), leads to the claim that a budget $b\leq 91$ cannot lead to overfitting.
If we assume $\epsilon=0.04$ and $\sigma=0.3$, we get with Hoeffding a bound $34$ and with \rebuttal{Bennett}\&Bernstein a bound $88$. With $\epsilon=0.06$, $\sigma=0.3$, \rebuttal{Bennett}\&Bernstein leads to a budget $189$. The budget becomes $482$ if we use $\epsilon=0.1$, $\sigma=0.3$.
Also, these bounds scale linearly with $s$  (the number of user preferences in A/B testings or dataset size).
These numbers are low, but we see that our bounds are not vacuous even in realistic scenarios.

\section{Consequences for Privacy and Poisoning}
\label{app:privacy}
\subsection{Privacy by Design}\label{sec:privacy}
In the context of large language models, privacy typically refers to users' desire to prevent their inputs, such as queries or conversations, from being inferred or reconstructed. Several studies have shown that gradient-based training methods can leak such information~\cite{privacyattack1,privacyattack2,privacyattack3,privacyattack4}.

In contrast, the \retrofit\ approach is an instance of {\bf privacy by design}~\cite{privacydesign}: It does not rely on gradients, nor does it require access to the raw content of user sessions. Instead, it only leverages a binary satisfaction signal\textendash{}whether a user prefers one version of the model over another. This can be collected via simple A/B testing, or from the individual samples of the dataset by comparing the models over the singleton \{sample\}. Then the algorithm proceeds using aggregated comparison outcomes. In particular, for A/B testing, our method operates without knowledge of the specific queries or model responses, that do not need to leave the user's device, and in the general case, we only feed the algorithm the aggregate of model comparison without direct access to the underlying~data.

We now examine how this applies in two distinct cases: the privacy of user queries and responses and the general case of privacy.

\paragraph{Privacy of Queries and Responses.} (like in A/B testing setting)
Our algorithm never observes the actual content of user interactions. If \(D_1\) and \(D_2\) differ by one prompt or
answer or any data unrelated to preference signals, then the outcome of~\cref{alg:retro} remains unchanged.
Since no gradients are
shared or aggregated, the system is inherently robust to inference attacks such as those
in~\cite{privacyattack1,privacyattack2,privacyattack4}.

\paragraph{\rebuttal{Algorithm stability.}} (general case of applying BBoxER on a dataset $D$)
We may also be interested in \rebuttal{the influence} of a single satisfaction response \(r_i\) associated with a specific user, or in the case were we compute this preference signal from the data. In this setting, we consider the case where \(D_1\) and \(D_2\) differ by exactly one such response, altered during a run.
Applying \cref{probadifferenceb} with $m=1$, we get that $A(D_1)$ and $A(D_2)$ are the same with probability at least $1-3b/\sqrt{2s\pi}$ when we have $s$ data points and $b$ iterations.  \rebuttal{Which represents $\varepsilon=0$ and $\delta=3b/\sqrt{2s\pi}$, but is insufficient from a DP standpoint given that $\delta$ needs to be polynomial in $\frac{1}{s}$. 
}

\paragraph{Comparison with Other Approaches.}
In supervised fine-tuning, both inputs and outputs are required for training. Reinforcement learning additionally requires access to the model’s responses and reward signals. While federated learning aims to limit privacy leakage by aggregating gradients locally, it still exposes sensitive intermediate data—such as gradients—that have been shown to be vulnerable to attacks~\cite{privacyattack2}. In contrast, our approach bypasses these vulnerabilities entirely by never interacting with such information in the first place, in the case of A/B testing. Or, by limiting the impact of individual samples through an aggregate compressed signal from the data.

\subsection{Poisoning}\label{sec:poisoning}
Poisoning attacks seek to influence the outcome of a learning algorithm by injecting malicious data. This threat has been widely studied in machine learning~\cite{biggio2012poisoning,poisoningthreat,wan2023poisoning}, and it is particularly relevant in interactive settings such as retrofitting. In our context, a poisoning attempt manipulates up to \(m\) user preferences per iteration to bias the optimization process.

Our robustness \cref{tpp} quantifies the impact of such attacks: the probability that the final model is
altered remains bounded by \(\frac{(2m + 1)b}{\sqrt{2s\pi}}\). This implies that an adversary must control \(m \sim
\sqrt{s}/b\) user preferences per round to affect the result significantly. Such an attack becomes increasingly difficult as
the number of users (or dataset size) \(s\) grows.

Importantly, this guarantee holds even under a strong adversarial model~\cite{malicious}, where the attacker is computationally unbounded and has full knowledge of the system. In practice, this highlights the resilience of our retrofitting method to small-scale poisoning.

\section{Additional Theoretical Results and Proofs}\label{sec:proofs}
\subsection{Proof of \texorpdfstring{\Cref{tpp}}{Theorem}}

We first prove the following Lemma.
\begin{lemma}[Privacy and robustness to poisoning -- single comparison]\label{ltpp}
    Consider a frequency $f=\frac1s \sum_{i=1}^s r_i$ computed by considering $s$ samples $r_1,\dots,r_s$ independent and
    identically distributed with $P(r_i=1)=f_0$ and $P(r_i=0)=1-f_0$ for some unknown $f_0$. Then
    \begin{equation}
    P_{r_1,\dots,r_s}\left( \exists \ f'\mbox{ such that $|f'-f|\leq m/s$ \text{ and } }(f-\frac12)\cdot(f'-\frac12)\leq 0   \right) \leq \frac{2m+ 1}{\sqrt{2s\pi}}.\label{probadifference1}
    \end{equation}
\end{lemma}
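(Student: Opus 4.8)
The plan is to reduce the existential, geometric event to an anti-concentration estimate for the binomial count $S_n := \sum_{i=1}^n r_i$, and then control the probability mass that the binomial places on a short window around $n/2$.

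First I would characterize the event. Since $f'$ ranges freely over the whole interval $[f - k/n,\ f + k/n]$, there exists an admissible $f'$ on the opposite side of $1/2$ (i.e. with $(f-\tfrac12)(f'-\tfrac12)\le 0$) if and only if that interval meets the point $1/2$, which is exactly the condition $|f - \tfrac12| \le k/n$. Writing $f = S_n/n$ with $S_n \in \{0,1,\dots,n\}$ integer-valued, this is equivalent to $|S_n - n/2| \le k$, i.e. $S_n$ lands in the window $W = [\,n/2 - k,\ n/2 + k\,]$. Hence the probability in question equals $P(S_n \in W)$, and both the existential quantifier and the product $(f-\tfrac12)(f'-\tfrac12)$ disappear entirely.

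Next I would bound $P(S_n \in W)$ by a counting-times-peak argument. The window $W$ has length $2k$ and therefore contains at most $2k+1$ integers; since $S_n$ is integer-valued, $P(S_n \in W) \le (2k+1)\,\max_{j} P(S_n = j)$, where $P(S_n = j) = \binom{n}{j} f_0^{\,j}(1-f_0)^{\,n-j}$. It then remains to bound the peak of the binomial probability mass function uniformly in the unknown parameter $f_0$.

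The crux, and the step I expect to be the main obstacle, is this last uniform anti-concentration bound. The mode of $\mathrm{Bin}(n,f_0)$ sits near $n f_0$, and for a window centred at $n/2$ the peak mass is largest in the regime $f_0 \approx 1/2$, where it is governed by the central binomial coefficient $\binom{n}{\lfloor n/2\rfloor} 2^{-n}$. A Stirling/Robbins estimate of this coefficient yields a bound of order $1/\sqrt{n}$, and one must track the constant carefully to reach the target factor $1/\sqrt{2\pi n}$; combining it with the count $2k+1$ then gives the claimed $\frac{2k+1}{\sqrt{2n\pi}}$. I would isolate the peak bound as a standalone numerical inequality on binomial coefficients, proved once and uniformly in $f_0$ via unimodality together with Stirling bounds, since pinning down its constant is where the real work lies; the reduction steps above are purely combinatorial and carry no probabilistic content.
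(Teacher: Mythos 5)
Your reduction and counting steps are exactly those of the paper's proof: it likewise observes that $(f-\frac12)(f'-\frac12)\le 0$ with $|f'-f|\le k/n$ forces $f\in[\frac12-k/n,\,\frac12+k/n]$, then bounds the probability of this window by the number of integer points, at most $2k+1$, times a pointwise bound on the binomial pmf. The only structural difference is the treatment of the unknown $f_0$: the paper first argues that the window probability is maximized at $f_0=\frac12$ (using that the window is symmetric about $\frac12$), and then invokes the cited bound $P(B(n,\frac12)=x)\le \frac{1}{\sqrt{2n\pi}}$ from \cite[Lemma A.3]{dgl}, whereas you push the supremum over $f_0$ inside, point by point, and propose to prove the peak bound by Stirling. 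That reordering is harmless in substance, with one caveat: your $\max_j$ must be restricted to $j\in W$, since the unrestricted mode probability of $B(n,f_0)$ is not $O(1/\sqrt{n})$ uniformly in $f_0$ (it tends to $1$ as $f_0\to 0$); your subsequent discussion makes clear this is what you intend.

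The genuine gap sits exactly at the step you flagged as the crux: the constant is unreachable. The central binomial mass is $\binom{n}{\lfloor n/2\rfloor}2^{-n}=(1+o(1))\sqrt{2/(\pi n)}=(1+o(1))\cdot\frac{2}{\sqrt{2\pi n}}$, i.e.\ \emph{twice} the per-point constant $\frac{1}{\sqrt{2\pi n}}$ that your counting argument needs; concretely $P(B(2,\frac12)=1)=\frac12>\frac{1}{\sqrt{4\pi}}\approx 0.28$, and for every even $n$ the peak strictly exceeds $\frac{1}{\sqrt{2\pi n}}$. So no amount of careful Stirling tracking closes your argument as written: it lands at $\frac{2(2k+1)}{\sqrt{2n\pi}}$, a factor of $2$ above the stated bound. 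This is not a defect you could have repaired, and your instinct about where the real work lies in fact exposes a weakness in the paper itself: the inequality it attributes to \cite[Lemma A.3]{dgl} is too strong by that same factor (the correct uniform bound is $\sqrt{2/(\pi n)}$), and for $f_0=\frac12$ and $k\ll\sqrt{n}$ the event probability is genuinely $(1+o(1))(2k+1)\sqrt{2/(\pi n)}$, so the constant in \cref{probadifference1} cannot be attained by any proof. The weakening to $(2k+1)\sqrt{2/(\pi n)}$ is immaterial downstream, since \cref{tpp}, the privacy $\delta$, and the poisoning threshold $k\sim\sqrt{n}/b$ are all order-of-magnitude statements, but a correct write-up of your proof (or of the paper's) has to carry the extra factor of $2$.
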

\begin{proof}
First, using \cite[Lemma A.3]{dgl}, we observe
that
\begin{equation}   
\forall x\in \{0,\ldots,s\}, P(B(s,\frac12)=x)\leq \frac1{\sqrt{2s\pi}}.\label{dgllemma}
\end{equation}

Consider a probability $p$ that individual comparisons are in favor of a model $model_1$ against a model $model_2$. 
Assume that a modification of the training set has an impact $\epsilon$ on the frequency $f$ with which $model_1$ was preferred against model $model_2$: the new frequency $f'$ is in $[f-\epsilon,f+\epsilon]$. $(f-\sfrac12)\times (f'-\sfrac12)\leq 0$ is possible only if $f\in [\sfrac12-\epsilon,\sfrac12+\epsilon]$. And since $f \sim \frac{1}{s}B(s,p)$, 
$P(f\in [\sfrac12-\epsilon,\sfrac12+\epsilon])$ is maximum if $p=\sfrac12$.
Then,
\begin{align*}    
    P\left(f\in [\sfrac12-\epsilon,\sfrac12+\epsilon]\right)   & \leq P\left(\frac{1}{s} B(s,\sfrac12) \in [\sfrac12-\epsilon,\sfrac12+\epsilon]\right) && \\
      & \leq P\left(B(s,\sfrac12) \in [s(\sfrac12 -\epsilon),s(\sfrac12 +\epsilon)]\right) &&\\
      & \leq \delta:= \frac{2s\epsilon + 1}{\sqrt{2s\pi}}\mbox{ thanks to \cref{dgllemma}.} &&
  \end{align*}
If someone modifies the data by modifying the answers corresponding to $m$ users, the frequency moves to $f'$ instead of $f$ with $|f-f'|\leq \epsilon=m/s$ where $s$ is the number of independent users/samples. 
This shows \cref{probadifference1}.
\end{proof}

To show~\cref{tpp}, we apply the Bonferroni correction when $m$ data values are modified at each iteration and we have $b$ iterations. Then, the probability of a difference between $output_1$ and $output_2$ is upper bounded by $b\times\delta$.
This shows~\cref{probadifferenceb}.

\subsection{Theorem on Data Extraction and Proof of \texorpdfstring{\Cref{co:unfair}}{Corollary}}\label{app:fair}

\cref{co:unfair} is a direct consequence of the following Theorem.

\vspace{5pt}

\begin{theoremUnboxed}\label{th:unfair}
    The cardinality of the set of possible outputs of \retrofit{} for a given $m_0,\w,a,b$ is bounded as follows:
    \begin{equation}
    \#\{  \mbox{\retrofit{}}(m_0,\w,D_i,a,b); i\in \{1,\dots,n \}\} \leq \sup_D \prod_{i=1}^b k_i\label{bou}.
    \end{equation}
\end{theoremUnboxed}
\begin{proof}
\cref{bou} is an immediate consequence of  \cref{eq:impact}.
If $A(m_0,D):=$\retrofit{} $(m_0,\w,D,a,b)$, then vulnerability to data extraction as in \cref{unfaireq} immediately implies that $\{E( A(m_0,D_i)); i \in \{1,\dots,n\} \}$ has cardinal at least $n$, and therefore $\{A(m_0,D_i);i\in \{1,\dots,n\}\}$ has cardinal at least $n$. If $2^b>n$ and $k_i\leq 2$, this contradicts~\cref{bou}.
\end{proof}
This means that no matter how large $s$ is, retrofitting can only produce a number of different outputs upper-bounded by the product over the \(k_i\), which represent the number of possible data-dependent choices per iteration - e.g., $2^b$ for algorithms with $k_i\leq 2$ and $b$ iterations.

\section{Experimental Details}\label{appendix:expdetails}

Here we provide details for the settings of the experiments in \cref{sec:xp}. 
\rebuttal{Experiments were implemented using Lingua \cite{meta_lingua} and vLLM \cite{kwon2023efficient}.}

All BBO algorithms are used with their default parameters in Nevergrad \citep{nevergrad}. 
At each iteration of \cref{alg:retro}, the modified model is evaluated on GSM8K train (8-shot, unless otherwise specified) and the objective function to optimize is the exact match. At the end of each run, the {\em FinalModel} is evaluated on the same objective function. Specifically, we compute the performance of $modified(m_0, \hat x)$ where $\hat{x}$ is the update that reaches the best objective function value during training.

\paragraph{Experiment 1.}
We apply \retrofit{} (\cref{alg:retro}) with rank-1 (LoRA) multiplicative parameter updates on the {\em unembedding layer}, as detailed in \cref{concrete}. In both base and instruct models, the total number of parameters to optimize is $hidden\_dim + vocab\_size = 4096 + 128,256 = 132,352$.
Each iteration requires around 40s on 16 A100 GPUs.

\paragraph{Experiment 2.}
We explore 3 setups on Llama3.1-8B:
\begin{itemize}
    \item[1.] \textbf{First attention layer}, where we perform a broadcast multiplicative update to the Q 
    matrix in the first attention layer by broadcasting a vector of size $4096$ to the $4096 \times 4096$ Q matrix and then using the update formula as in \cref{sec:onelayer}. 
    \item[2.] \textbf{All attention layers}, where we perform a broadcast  multiplicative update 
    on all the $Q$ matrices of all attention layers. Resulting in $n\_layers \times hidden\_dim = 32\times 4096 = 131,072$ parameters to optimize, using the update formula as in \cref{sec:multi_layers}
    \item[3.] \textbf{All attention layers and no few-shot}, where we apply a broadcast multiplicative update on all the $Q$ matrices of all attention layers, as in 2 above. However, we do not include any few-shot examples in the prompt during either training or testing. 
\end{itemize}

All experiments are carried out with budget $b=150$ unless otherwise specified. In addition to GSM8K test, we also evaluate the results on GSM+ (8-shot) and SVAMP (8-shot).
Each iteration requires around 40s on 16 A100 GPUs.

\paragraph{Experiment 3}
In this experiment, \DCMA\ is used to directly optimize all normalization layers of Llama3.1-8B(-Instruct) and Qwen2.5-3B-Instruct (without using any low rank modification). The number of updated parameters is $n\_norm\_layers \times hidden\_dim = 65 \times 4096 = 266,240$ for Llama3.1-8B(-Instruct) and $n\_norm\_layers \times hidden\_dim = 73 \times 2048 = 149,504$ for Qwen2.5-3B-Instruct.
All experiments are conducted with budget $b \in \{150, 300\}$. Furthermore, we evaluate on additional mathematical reasoning benchmarks: MATH500 (4-shot), Hellaswag (0-shot), GSM+ test\_mini (8-shot), ARC Easy (0-shot) and ARC Challenge (0-shot), AMC23 (0-shot).
Each iteration requires 30s on 16 A100 GPUs for Llama3.1-8B(-Instruct) and 25s on 16 A100 GPUs for Qwen2.5-3B-Instruct.

\paragraph{Experiment 4}
Most MIA rely on the loss (negative likelihood)  of the model over the tested sample to determine membership in the training dataset~\cite{fu2023practical,carlini2022membership,privacyattack4}. We can formulate the membership test as follows:
\begin{align*}
    f\left( \text{NLL}\{sample\} \right) > \text{threshold}\{sample\}
\end{align*}
where $f$ denotes a transformation (usually continuous) applied to the sample loss. The threshold used in membership test is sample dependent, in order to account for the fact that some samples may exhibit low losses even without being part of the training set, simply because they share patterns that were seen during training.

For example, in~\citet{carlini2022membership}, the authors propose to model the distribution of model's ``confidence'' with a gaussian distribution; they train shadow models to estimate the parameters $(\mu_{in}, \sigma_{in},\mu_{out},\sigma_{out})$ and perform a Likelihood-ratio test, \rebuttal{with $p$ the density function of Gaussian distribution and $\phi$ the logit scaling as defined in the LiRA paper $\phi(t) = \log(\frac{t}{1-t})$}:
\begin{align*}
    \frac{p(\phi(\text{NLL}\{sample\});\mu_{in}, \sigma_{in})}{p(\phi(\text{NLL}\{sample\});\mu_{out}, \sigma_{out})} &> \text{threshold}\\
    \Leftrightarrow p(\phi(\text{NLL}\{sample\});\mu_{in}, \sigma_{in}) &> \text{threshold} \times p(\phi(\text{NLL}\{sample\});\mu_{out}, \sigma_{out})
\end{align*}

Therefore, if a learning method does not impact the loss of the model over each sample, it would be difficult \rebuttal{or} impossible to perform any MIA that relies on statistical tests based on the loss. We thus propose the two following metrics as proxies for the ability of membership inference attacks to succeed:
\begin{align}\label{diff-full}
    \frac{1}{\text{\scriptsize num of samples}} \sum_{sample} |\text{\scriptsize per\_token\_}\text{NLL}_{\text{new model}}\{sample\} - \text{\scriptsize per\_token\_}\text{NLL}_{\text{\rebuttal{initial} model}}\{sample\}|
\end{align}

\begin{align}\label{diff-cot-a}
    \frac{1}{\text{\scriptsize num of samples}} \sum_{sample} |\text{\scriptsize per\_token\_}\text{NLL}^{CoT,a}_{\text{new model}}\{sample\} - \text{\scriptsize per\_token\_}\text{NLL}^{CoT,a}_{\text{\rebuttal{initial} model}}\{sample\}|
\end{align}

The only distinction between \cref{diff-full} and \cref{diff-cot-a} is which tokens are used to compute the negative log-likelihood (NLL): \cref{diff-cot-a} computes NLL only over the (chain-of-thought, answer) pair, whereas \cref{diff-full} computes NLL over the entire input (few-shot examples, question, chain-of-thought, and answer). We aggregate the absolute differences (i.e., sum of absolute changes in sample NLL) across all samples so that positive and negative changes do not cancel each other out; although in practice the loss of the new finetuned model will be lower than the initial model. To interpret the metric: a low value indicates that the sample losses are only slightly affected by the learning method, suggesting it would be difficult to mount an attack that relies on loss changes, whereas a large value indicates that it might be possible (but not certain) to perform such an attack successfully.
\changes{We propose using these metrics instead of a standard membership inference attack (MIA) metric. Because the initial model may have been trained in a way that is already detectable, it is more meaningful to measure the absolute difference in loss between the updated model and the initial model. Moreover, MIA performance is sensitive to hyperparameters (such as the number of shadow models, learning rate, and the amount of member and non-member data) so we prefer an intrinsic metric like the one we propose.}
By showing in~\cref{tab:exp4} that \retrofit{} barely affects these metrics, and is orders of magnitude less than finetuning methods, including privacy-aware ones like DP-AdamW, we demonstrate robustness of \retrofit{} against a vast majority of membership inference attacks.

\changes{\paragraph{WBC parameters:} we use the default window sizes as used by the authors in~\cite{chen2026window}, namely $w \in \{2,3,4,6,9,13,18,25,32,40\}$. The members data are 
$500$ datapoints sampled randomly from the training set of GSM8K and the non-members data are 500 datapoints sampled randomly from the test set of GSM8K. We only consider the NLL over the tokens of the CoT and answer as most methods we consider fine-tune only over those tokens.}

Below we detail all algorithms used in \cref{tab:exp4}:
\begin{itemize}
    \item \textbf{ft}: default SFT with next token prediction loss, global batch size of 128, same template used during evaluation that includes few shots examples but only the CoT and the answer contribute to the loss. AdamW is used with $(beta_1=0.9, beta_2=0.95)$ and a learning rate of $1e^{-5}$ with cosine annealing and warm-up over the first 5\% steps. The training is done for 2 and 5 epochs.
    \item \textbf{ft-norm}: similar to \textbf{ft}, but only the parameters of RMSNorm layers are optimized, and the learning rate is increased to $2e^{-4}$ instead.
    \item \textbf{DP-AdamW}: differentially private training with gradient norm clipping set to $1.0$, $\delta=10^{-5}$, $L=128$, $N=7473$ samples, $292$ gradient steps (5 epochs) and a target $\epsilon=8$, the privacy accounting of~\cite{abadi2016deep} is used to infer the added noise scale $\sigma$. The training is done with the same template used during evaluation that includes few shots, all tokens contribute to the loss.
    \item \textbf{\retrofit{}-norm}: exact setup as in experiment 3 for our algorithm, step $\delta = 0.01$, only updating RMSNorm layers with an exponential multiplicative full update. Different budget values are used 150, 300, 1200. Nevergrad seed = 42.
\end{itemize}

\section{Additional Experiments}\label{sec:addexps}
\FloatBarrier
\FloatBarrier

In this section, we describe extensions of the experiments performed in \cref{sec:xp}, in particular for Bet-and-Run variants and for training on larger datasets.

\paragraph{Extensions of Experiment 2.} We have performed additional experiments on Llama3.1-8B in the spirit of Experiment 2 (\cref{sec:xp}), by modifying the {\em output layer} instead of the attention layers, using a multiplicative update by broadcasting a vector of size $vocab\_size = 128,256$ as described in \cref{sec:onelayer}. We also experiment with two different update constants in $\{0.01, 0.0001\}$. \cref{fulltable} shows the results. We see improvements particularly for the larger update constant and Bet-and-Run algorithms.

\begin{table}[ht]
\scriptsize
\centering
\caption{\label{fulltable}
{ {\bf Additional Experiments}. Counterpart of \cref{tab} for the output layer: Performance difference ($\%$)  after vs before \retrofit{} with Llama3.1-8B (Base) on GSM8K-train, evaluated on GSM8K-test (ID), and (OOD) SVAMP and GSM+, with budget $b=150$. Number of seeds in brackets. \textcolor{MyRed}{Red: detrimental runs,  $<-1\%$.} \textcolor{MyGreen}{Green: beneficial runs, $>1\%$.} \textcolor{cyan}{Blue rows are bet-and-run cases}.
}
}
\setlength{\tabcolsep}{0.3em}
\begin{tabular}{|p{2.2cm}|c|c|c|}
\hline
\multicolumn{1}{|c|}{BBO}  &  \multicolumn{1}{c|}{GSM} &  \multicolumn{1}{c|}{SVAMP} &  \multicolumn{1}{c|}{GSM+}\\
\multicolumn{1}{|c|}{algorithm  (\# runs)} & progress & progress & progress \\
\hline 
\multicolumn{4}{|c|}{Output layer, no few-shot, broadcast d=128256}\\
\multicolumn{4}{|c|}{$w0\times e(0.0001\times x)$}\\
\hline
\cellcolor{lightgray} Base score  & \cellcolor{lightgray} 20.31 & \cellcolor{lightgray} 58.30 & \cellcolor{lightgray} 19.06 \\
\hline 
\OneFifth\ (2)  & 0.49 $\pm$ 0.34 & 0.05 $\pm$ 0.15 & 0.01 $\pm$ 0.02\\
\DCMA  & 0.07 & 0.00 & 0.01\\
\Lengler  & 0.00 & 0.00 & 0.00\\
\COLengler\ (2)  & 0.00 $\pm$ 0.00 & 0.05 $\pm$ 0.05 & 0.00 $\pm$ 0.00 \\
\hline
\multicolumn{4}{|c|}{Output layer, broadcast d=128256}\\
\multicolumn{4}{|c|}{$w0\times e(0.01\times x)$}\\
\hline
\cellcolor{lightgray} Base score  & \cellcolor{lightgray} 56.93 & \cellcolor{lightgray} 76.6 & \cellcolor{lightgray}51.34 \\
\hline 
\Lengler\ (2)   & \cellcolor{MyGreen}  1.89 $\pm$ 1.44 & -0.2 $\pm$ 0.6 &  0.9 $\pm$ 0.41 \\
\COLengler\ (5)   & \cellcolor{MyGreen}  2.01 $\pm$ 0.49 & -0.76 $\pm$ 1.03 & 0.19 $\pm$ 0.43 \\
\Discrete\ (4)   & 0.018 $\pm$ 0.03 & 0.075 $\pm$ 0.08 & 0.005 $\pm$ 0.015 \\
\Portfolio\ (2) & 0.79 $\pm$ 0.11 & 0.45 $\pm$ 0.15 & -0.05 $\pm$ 0.05\\
\rowcolor{cyan} \Triple   & \cellcolor{MyGreen}  2.27 & 0.00 & 0.57\\
\rowcolor{cyan} \MultiDisc   & \cellcolor{MyGreen}  1.89 & 0.20 & -0.27\\
\rowcolor{cyan} \MultiDisc   & \cellcolor{MyGreen}  2.19 & \cellcolor{MyRed}  -2.20 & -0.02\\
\hline
\end{tabular}
\end{table}

\paragraph{Experiment 3 for Bet-and-Run Algorithms.} The theoretical results in \cref{sec:bar} demonstrate that Bet-and-Run algorithms enable the use of larger budgets without risks of overfitting.  This has been partially  validated in Experiment 2 by the results of \Triple, performing well even with greater budget (Table \ref{tab}). We here extend this analysis and  report Bet-and-Run results for the setup of Experiment 3 in \cref{tab:various_gsm8k_BetAndRun}. Specifically, we run \retrofit{} under the same setup as in Experiment 3 (detailed in \cref{appendix:expdetails}), and use the BetAndRun (N, b) variant of \DCMA, where we run \DCMA\ for N different runs with budget $b'=b/N$, and pick the update $\hat x$ that performed the best across these runs.
In practice, we use the runs with 5 different seeds from \cref{tab:various_gsm8k} and report the mean and standard deviation of all possible combinations $\binom{5}{N}$ with $N \in \{3, 4\}$.

\begin{table*}[ht]{
\caption{
{\bf Extension of Experiment 3 to Bet-and-Run algorithms}, same setting as \cref{tab:various_gsm8k}
}\label{tab:various_gsm8k_BetAndRun}
\centering
\tabcolsep=0.1cm
\resizebox{1.\linewidth}{!}{
    \begin{tabular}{cc@{\hskip 0.2in}  c@{\hskip 0.2in} cccccc}
      \toprule
        \multicolumn{2}{c}{\multirow{2}{*}{Models}} & \multicolumn{1}{c}{ID} & \multicolumn{6}{c}{OOD}\\
        \cmidrule(rr){3-3} \cmidrule(rr){4-9}
        & & GSM8K & GSM+ & MATH500 & ARC Easy & ARC Challenge & AMC23 & Hellaswag \\
        \midrule
        \rowcolor{lightgray!30} \multicolumn{2}{c}{Llama3.1-8B} & 54.97 & 36.50 & 20.20 & 83.04 & 55.19 & 0.00 & 80.71 \\
        \multirow{4}{*}{Llama3.1-8B-BBoxER-BetAndRun} & (N=3, b=450) & 55.66$\pm$0.42 & 37.62$\pm$0.34 & 20.04$\pm$0.45 & 82.96$\pm$0.08 & 55.03$\pm$0.03 & 3.00$\pm$3.67 & 80.62$\pm$0.05 \\
        & (N=3, b=900) & 56.83$\pm$0.21 & 37.74$\pm$0.28 & 20.36$\pm$0.77 & 83.26$\pm$0.11 & 54.96$\pm$0.18 & 1.00$\pm$1.22 & 80.68$\pm$0.11 \\
        & (N=4, b=600) & 55.62$\pm$0.21 & 37.73$\pm$0.30 & 20.20$\pm$0.40 & 82.94$\pm$0.03 & 55.02$\pm$0.00 & 1.50$\pm$3.00 & 80.60$\pm$0.04 \\
        & (N=4, b=1200) & 56.85$\pm$0.12 & 37.77$\pm$0.22 & 20.48$\pm$0.64 & 83.29$\pm$0.10 & 54.88$\pm$0.07 & 0.50$\pm$1.00 & 80.71$\pm$0.10 \\
        \cmidrule{1-8}
        \rowcolor{lightgray!30} \multicolumn{2}{c}{Llama3.1-8B-Instruct} & 77.26 & 54.37 & 37.60 & 79.62 & 55.45 & 22.50 & 79.90 \\
        \multirow{4}{*}{Llama3.1-8B-Instruct-BBoxER-BetAndRun} & (N=3, b=450) & 78.20$\pm$0.21 & 55.02$\pm$0.12 & 36.68$\pm$0.36 & 78.90$\pm$0.38 & 55.40$\pm$0.08 & 23.25$\pm$2.25 & 79.96$\pm$0.04 \\
        & (N=3, b=900) & 78.37$\pm$0.21 & 55.18$\pm$0.15 & 37.20$\pm$0.54 & 79.57$\pm$0.12 & 55.59$\pm$0.19 & 18.00$\pm$1.50 & 79.92$\pm$0.02 \\
        & (N=4, b=600) & 78.26$\pm$0.03 & 55.03$\pm$0.10 & 36.52$\pm$0.24 & 78.77$\pm$0.34 & 55.42$\pm$0.07 & 24.00$\pm$2.00 & 79.96$\pm$0.03 \\
        & (N=4, b=1200) & 78.45$\pm$0.18 & 55.17$\pm$0.08 & 37.04$\pm$0.48 & 79.54$\pm$0.10 & 55.54$\pm$0.17 & 17.50$\pm$0.00 & 79.91$\pm$0.02 \\
        \cmidrule{1-8}
        \rowcolor{lightgray!30} \multicolumn{2}{c}{Qwen-2.5-3B-instruct} & 79.98 & 62.29 & 41.00 & 72.39 & 47.38 & 32.50 & 74.94 \\
        \multirow{4}{*}{Qwen-2.5-3B-instruct} & (N=3, b=450) & 83.06$\pm$0.24 & 62.12$\pm$0.32 & 42.70$\pm$0.20 & 72.69$\pm$0.12 & 47.85$\pm$0.17 & 36.00$\pm$1.22 & 74.99$\pm$0.07 \\
        & (N=3, b=900) & 83.90$\pm$0.42 & 61.66$\pm$0.32 & 42.26$\pm$0.65 & 72.61$\pm$0.21 & 47.63$\pm$0.31 & 37.00$\pm$2.45 & 75.04$\pm$0.05 \\
        & (N=4, b=600) & 83.08$\pm$0.18 & 62.24$\pm$0.27 & 42.72$\pm$0.16 & 72.71$\pm$0.05 & 47.86$\pm$0.10 & 35.50$\pm$1.00 & 75.00$\pm$0.06 \\
        & (N=4, b=1200) & 84.08$\pm$0.30 & 61.55$\pm$0.02 & 42.52$\pm$0.16 & 72.68$\pm$0.19 & 47.52$\pm$0.27 & 36.00$\pm$2.00 & 75.06$\pm$0.04 \\
        \bottomrule
    \end{tabular}}
\vspace{0.5cm}}
\end{table*}

Comparing the results in \cref{tab:various_gsm8k_BetAndRun} with those in \cref{tab:various_gsm8k} (\retrofit{} without Bet-and-Run), we observe that in 100\% runs (with or without Bet-and-Run) \retrofit{} improves results; in 83\% of cases Bet-and-Run outperforms its base counterpart.

\paragraph{Experiment 3 for Larger Training Sets.} In \cref{tab:various_gsm8k_math} we report the results of running \retrofit{} with a larger training dataset. This experiment is motivated by the theoretical prediction that the allowable budget scales linearly with the dataset size. Formally, we adopt the same experimental setup as in Experiment 3 (see \cref{appendix:expdetails}), but in addition to GSM8K train (7,473 samples), we also include the MATH train set (7500 samples) into the training set, thus doubling its size compared to previous experiments. We define the objective function for \retrofit{} to be the average of exact matches (EM) across both datasets : $metric = \frac{1}{2}EM(GSM8K) + \frac{1}{2}EM(MATH)$. We run \retrofit{} with a budget of up to $b=1200$ and evaluate the model every 150 iterations. As the budget increases, we observe eventual overfitting.  Interestingly, we find that the Llama3.1-8B base model does not overfit on the GSM8K test set and continues to improve, contrary to its instruct variant, which begins to overfit earlier. See also \cref{table7_figures} for illustration of performance gain on some of the benchmarks.

\FloatBarrier

\begin{table*}[ht]{
\caption{{\bf Experiment 3 (full update, normalization layers) on larger training set.} Comparison of model performance of BBoxER run with D-CMA across various benchmarks, 1 seed. 
}\label{tab:various_gsm8k_math}
\small
\centering
\tabcolsep=0.1cm
\resizebox{1.\linewidth}{!}{
    \begin{tabular}{cc@{\hskip 0.2in}  ccc@{\hskip 0.2in} ccccc}
      \toprule
        \multicolumn{2}{c}{\multirow{2}{*}{Models}} & \multicolumn{3}{c}{ID} & \multicolumn{5}{c}{OOD}\\
        \cmidrule(rr){3-5} \cmidrule(rr){6-10}
        &  & GSM8K & MATH & MATH500 & GSM+ & ARC Easy & ARC Challenge & AMC23 & Hellaswag \\
\midrule
\rowcolor{lightgray!30} \multicolumn{2}{c}{Llama3.1-8B} & 54.97 & 21.22 & 20.20 & 36.50 & 83.00 & 55.19 & 0.00 & 80.71 \\
\multirow{9}{*}{Llama3.1-8B-BBoxER} & (b=150) & 55.42 & 21.00 & 19.20 & 36.96 & 83.09 & 54.85 & 0.00 & 80.76 \\
& (b=300) & 55.27 & 21.14 & 18.80 & 36.79 & 83.09 & 54.85 & 2.50 & 80.78 \\
& (b=450) & 55.27 & 21.14 & 18.80 & 36.79 & 83.09 & 54.85 & 2.50 & 80.78 \\
& (b=600) & 55.42 & 20.74 & 20.60 & 37.67 & 83.04 & 54.94 & 5.00 & 80.77 \\
& (b=750) & 57.85 & 20.92 & 20.20 & 37.67 & 83.00 & 54.85 & 0.00 & 80.83 \\
& (b=900) & 57.47 & 21.58 & 21.20 & 38.75 & 82.66 & 55.11 & 0.00 & 80.66 \\
& (b=1050) & 57.47 & 21.58 & 21.20 & 38.75 & 82.66 & 55.11 & 0.00 & 80.66 \\
& (b=1200) & 58.61 & 21.14 & 21.00 & 39.33 & 83.04 & 55.36 & 5.00 & 80.81 \\
\midrule
\rowcolor{lightgray!30} \multicolumn{2}{c}{Llama3.1-8B-Instruct} & 77.26 & 38.84 & 37.60 & 54.37 & 79.62 & 55.36 & 22.50 & 79.90 \\
\multirow{9}{*}{Llama3.1-8B-Instruct-BBoxER} & (b=150) & 77.41 & 38.78 & 37.40 & 54.12 & 79.45 & 55.36 & 25.00 & 80.03 \\
 & (b=300) & 78.24 & 38.86 & 38.60 & 55.04 & 79.58 & 55.36 & 30.00 & 79.91 \\
 & (b=450) & 77.56 & 38.40 & 37.00 & 54.62 & 80.00 & 55.54 & 22.50 & 79.94 \\
 & (b=600) & 77.79 & 39.04 & 38.20 & 54.58 & 79.83 & 55.54 & 22.50 & 79.94 \\
 & (b=750) & 78.70 & 38.86 & 37.00 & 55.33 & 79.45 & 55.36 & 15.00 & 80.00 \\
 & (b=900) & 78.54 & 38.30 & 37.40 & 55.25 & 79.70 & 55.45 & 20.00 & 80.10 \\
 & (b=1050) & 77.86 & 38.62 & 37.40 & 55.29 & 79.53 & 55.54 & 20.00 & 79.98 \\
 & (b=1200) & 77.79 & 38.86 & 37.40 & 55.42 & 79.87 & 55.79 & 15.00 & 80.00 \\
\midrule
\rowcolor{lightgray!30} \multicolumn{2}{c}{Qwen2.5-3B} & 79.98 & 43.68 & 41.00 & 62.38 & 72.35 & 47.38 & 32.50 & 74.94 \\
\multirow{9}{*}{Qwen2.5-3B-BBoxER} & (b=150) & 83.55 & 43.32 & 39.80 & 62.00 & 72.77 & 48.24 & 37.50 & 75.07 \\
 & (b=300) & 82.79 & 44.10 & 42.60 & 62.17 & 72.60 & 47.64 & 40.00 & 75.09 \\
 & (b=450) & 83.09 & 43.66 & 39.40 & 62.58 & 72.81 & 48.24 & 42.50 & 74.94 \\
 & (b=600) & 82.64 & 44.94 & 42.60 & 62.17 & 72.98 & 47.90 & 42.50 & 74.93 \\
 & (b=750) & 82.87 & 45.26 & 44.00 & 61.54 & 71.54 & 47.47 & 32.50 & 74.85 \\
 & (b=900) & 84.08 & 45.68 & 47.80 & 63.04 & 72.60 & 48.15 & 40.00 & 74.92 \\
 & (b=1050) & 83.17 & 46.50 & 45.60 & 62.58 & 72.85 & 47.47 & 32.50 & 74.98 \\
 & (b=1200) & 82.49 & 46.72 & 45.80 & 61.46 & 73.32 & 46.27 & 37.50 & 74.91 \\
        \bottomrule
    \end{tabular}}
\vspace{0.5cm}}
\end{table*}

\begin{figure}[ht!]
\centering
\begin{minipage}[c]{\textwidth}
    \centering
    \includegraphics[width=0.4\textwidth]{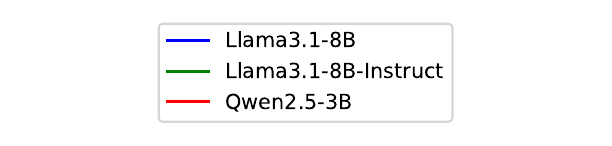}
\end{minipage}
\begin{minipage}[c]{\textwidth}
\hspace{0.2cm}
\includegraphics[width=0.3\textwidth]{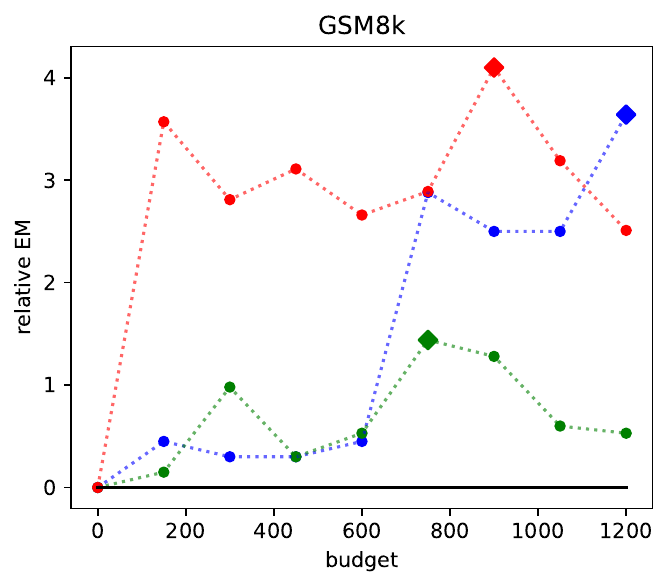}
\includegraphics[width=0.32\textwidth]{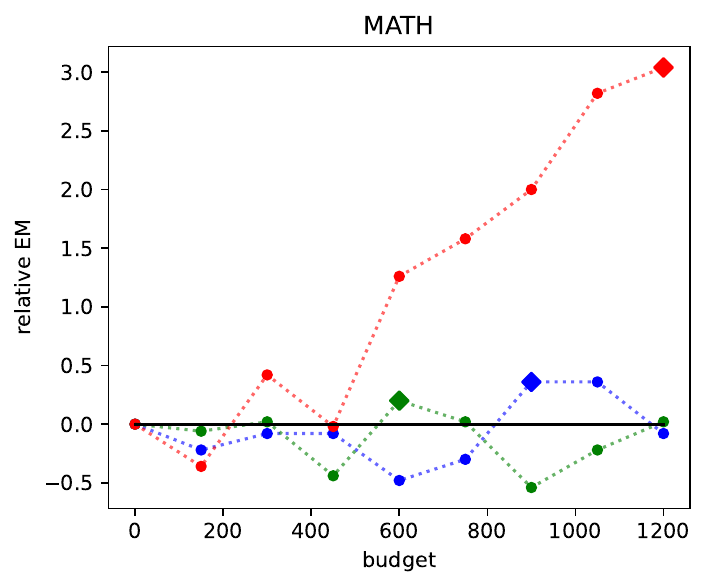}
\includegraphics[width=0.32\textwidth]{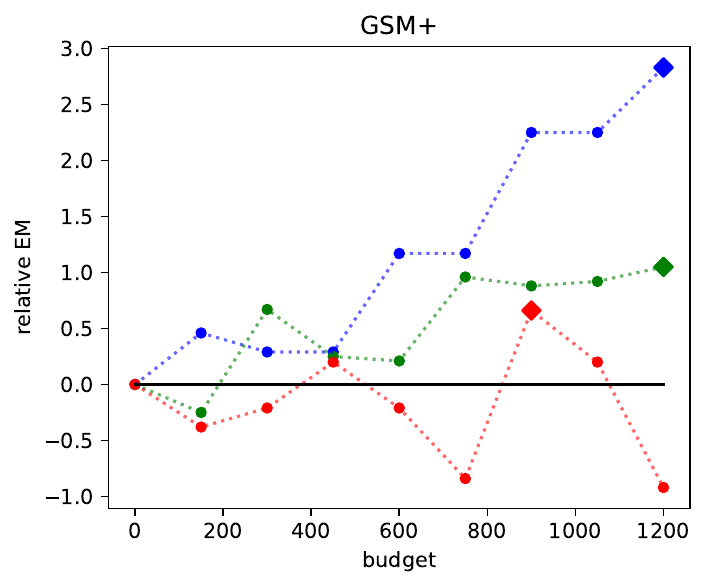}
\end{minipage}
\begin{minipage}[c]{\textwidth}
\includegraphics[width=0.32\textwidth]{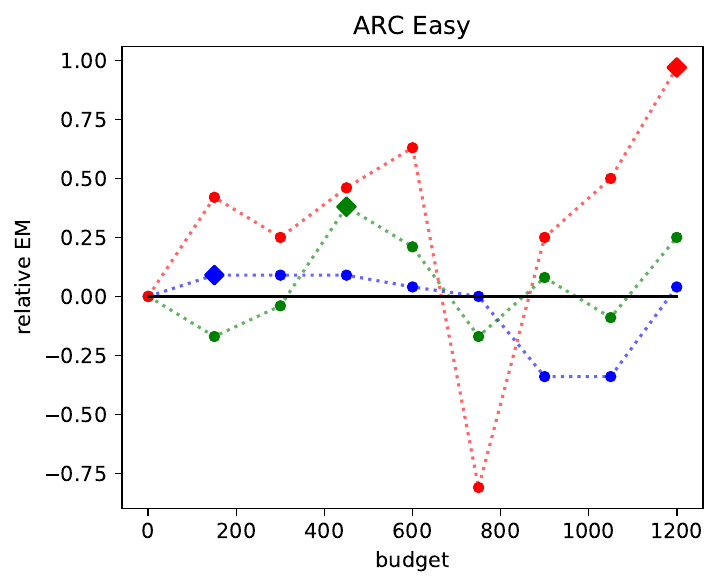}
\includegraphics[width=0.32\textwidth]{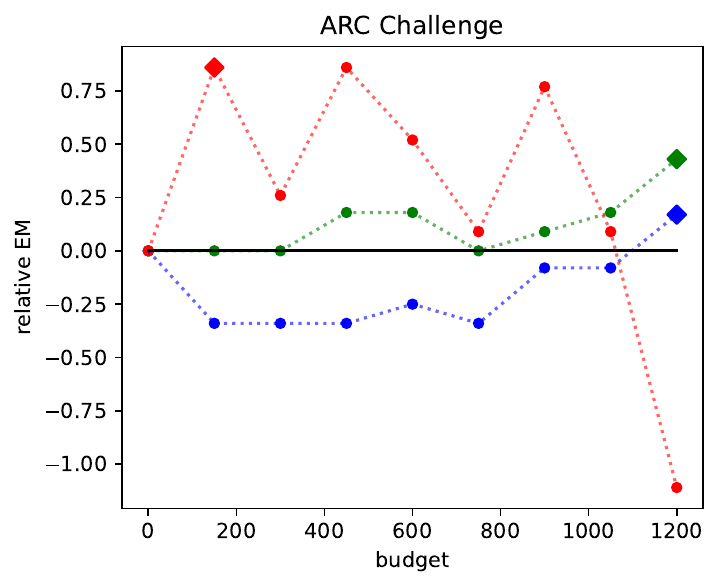}
\includegraphics[width=0.32\textwidth]{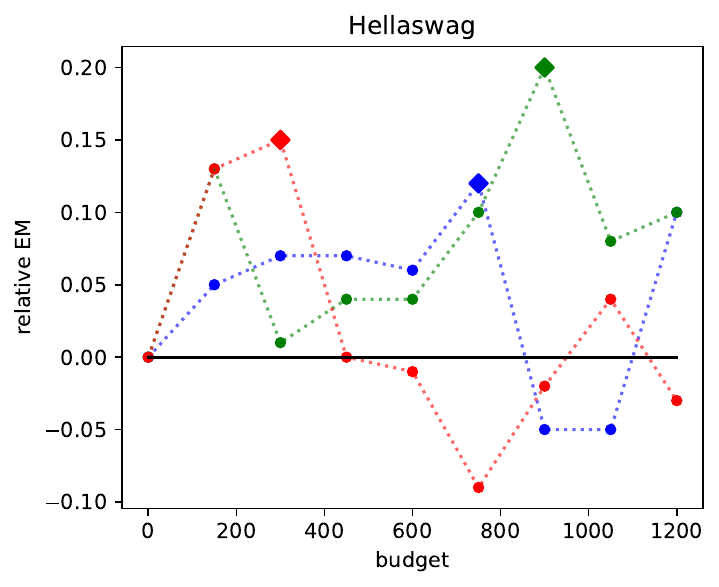}
\end{minipage}
\caption{\label{table7_figures} {\bf (full update, normalization layers)} Comparison of model performance of BBoxER run with D-CMA across various benchmarks, 1 seed. See \cref{tab:various_gsm8k_math}.}
\end{figure}

\rebuttal{
\paragraph{Experiment 3 for different tasks.} In \cref{tab:diff_tasks} we report the results of running \retrofit{} on tasks beyond mathematical reasoning. We update Llama-3.1-8B on the ARC-Easy training split and evaluate on the ARC-Easy test split, and repeat the same procedure for ARC-Challenge and Hellaswag. We report the mean and standard deviation across five random seeds of the relative improvement, or difference of accuracy, compared to the base model. We notice that on ARC Easy and Hellaswag we get an improvement, while it is not the case on ARC Challenge. This likely stems from the base model’s performance across tasks. On ARC Easy and Hellaswag, the base model scores about 80\%, so “retrofitting” can provide meaningful improvement. In contrast, on ARC-Challenge the base is about 50\% (random is at ~25\%), and BBoxER is designed to provide additional improvement as a lightweight add-on, rather than heavy duty training: if the base is weak, perturbing parameters won’t meaningfully improve the model even if we notice slight improvement on the training split.
}

\begin{table}
\centering
\caption{{\bf Experiment 3 (full update, normalization layers) different tasks.} Relative accuracy improvement using \retrofit{} run with D-CMA, 5 seeds.}\label{tab:diff_tasks}
\begin{tabular}{cccc}
\toprule
\retrofit{} budget b & ARC Easy & ARC Challenge & Hellaswag \\
\midrule
150 & 0.01 $\pm$ 0.10 & -0.14 $\pm$ 0.12 & 0.16 $\pm$ 0.07 \\
300 & 0.13 $\pm$ 0.06 & -0.21 $\pm$ 0.17 & 0.25 $\pm$ 0.11 \\
600 & 0.18 $\pm$ 0.16 & -0.05 $\pm$ 0.25 & 0.47 $\pm$ 0.06 \\
1200 & 0.85 $\pm$ 0.16 & 0.24 $\pm$ 0.21 & 0.89 $\pm$ 0.13 \\
\bottomrule
\end{tabular}
\end{table}

\paragraph{Further Discussion of all Experiments.}
Consistent with our theoretical results, our results are robust and show performance improvement with little to no overfitting even with larger budgets.
We note that even when running various experiments with randomly chosen parameters, significant performance drops are rare:  In \cref{xplama} and in \cref{tab}, all data points are independent from one another (i.e. computed independently); This allows us to compute P-values (\cref{pval}) to support our claims of statistical significance of the observed performance improvements.

 In order to contextualize our findings, we can also compare to post-training with GRPO which proved to be highly effective on reasoning tasks. 
 Most works deploy GRPO on larger training sets and we were unable to find results for the models deployed in our study. However, as shown in ~\cite{lin2025cppo},  GRPO achieved a notable 22\% improvement when training a Qwen-2.5-1.5B-Instruct model on the GSM8K dataset, leading to results between 77\% and 81\% with 1B models, without transfer to OOD. However, such improvement jumps tend to shrink with larger models. Nevertheless, though  the Qwen-2.5-3B-Instruct model we use already starts at nearly 80\%, it is improved to 84\% by \retrofit{} (\cref{tab:various_gsm8k_BetAndRun}).
 
There are only few works that train on GSM8K and study OOD transfer. 
The authors of ~\cite{sun2025self1improvement} employ an 'on-policy' training approach, generating training samples through model prompting and subsequently fine-tuning on the resulting question-answer pairs. Their approach yields significant gains on both GSM8k and GSM+ benchmarks when evaluated in a {\em zero-shot} setting, by up to 28.8\% on GSM8K with Llama3.1-8B-Instruct. However, the gains are less pronounced in the 5-shot setting, with an increase of 1.8\% using self-consistency and 5 generations, figures that are close to those obtained by \retrofit\ on GSM8K and GSM+, as can be seen in \cref{tab:various_gsm8k_BetAndRun}.\\
In any case, remember that our goal in this work is not to achieve state-of-the-art performances on these and other benchmarks, but rather to show that our approach sacrifices little for provable privacy and overfitting guarantees.

\section{P-values}\label{pval}
Contemporary papers on LLMs frequently involve a small number of experiments, due to the significant computational burden.
But because we run independent tests, we can compute rigorous p-values by Fisher's exact test, getting a measure of statistical significance despite the moderate number of experiments:
\begin{itemize}
    \item We computed the $p$-value of the frequency of green vs red in the columns of  \cref{tab} and \cref{fulltable}, the null hypothesis being "the probability of improvement is $\leq 0.5$". 
    \item For the values in \cref{xplama}, we computed the $p$-value for the obtained frequency of successful runs (\ie{} runs above baseline, vs below baseline) for budget $>20$, under the null hypothesis "the probability of improvement is $\leq 0.5$". 
    \item In Experiment 3, we run \retrofit{} with budget $b=150$ or $b=300$. For each column of results we performed an exact Fisher test on all \retrofit{} runs (both 150 and 300, without considering the Bet-And-Runs) and computed p-values: there is therefore one $p$-value per downstream task, against the null hypothesis  "there is a probability of improvement $\leq \frac12$". These different tests are not independent.
\end{itemize}

When computing p-values from tables of results, we used the initial data, not the averaged ones, obtaining one data point per independent run.

{\bf P-values Values.}

{\bf Experiment 1}: P-values in \cref{xplama} are $3e^{-8}$ (left), $3e^{-5}$ (right).

{\bf Experiment 2}:
\begin{itemize}
\item \cref{tab}: P-values  are significant for GSM ($5e^{-3}$) and  GSM+ ($0.03$), not for SVAMP. 
\item \cref{fulltable}: P-values are 0.003 for GSM, 0.5 for the transfer to GSM+, no improvement for SVAMP.
\end{itemize}

{\bf Experiment 3}: The P-values are $5e^{-7}$ for GSM8K, $1e^{-4}$ for GSM+, $1e^{-4}$ for AMC23.\\
Other results are usually positive but without significant p-value: 0.13 for ARC Easy, 0.21 for ARC Challenge, 0.36 for MATH500, 0.12 for Hellaswag.

\section{BBoxER Execution Example}\label{section:bboxer_example}
\rebuttal{
In this section we provide a step by step execution example of \retrofit{} (\cref{alg:retro}), using \OneFifth{} (\cref{three}, (1+1)-ES) as the BBO algorithm and \textit{Full Layer Update} on the normalization layers as our choice of $modified$.
}

\rebuttal{
\textbf{Input}: seed $\omega$, dataset $D$, BBO algorithm $a$=\OneFifth{}, budget $b$
\begin{description}
    \item[\textbf{step 1}:] initialize the internal state by seeding the rng with $\omega$, and fixing the step size $\sigma$
    \item[\textbf{step 2}:] sample a $z \sim \sigma\mathcal{N}(0,I)$ via $a.ask()$
    \item[\textbf{step 3}:] update model weights $m_i = modified(m_0, z)$, where every RMSNorm layer weights $w_i^{(j)}$ are updated with $w_i^{(j)} = w_0^{(j)} \times \exp{(0.01*z_j)}$ and $z = (z_1,\dots,z_\text{number RMSNorm layers})$
    \item[\textbf{step 4}:] $k_i$ = $a.numCases()$ = 2 (we only compare with the best so far, therefore 2 possible choices)
    \item[\textbf{step 5}:] $choice_i = a.compare((m_{best}, m_i), D)$ this comparison is either done by majority vote in A/B testing or by comparing the performance of $m_{best}$ and $m_i$ over $D$ (note that only the comparison result is needed and not the actual performance of each model)
    \item[\textbf{step 6}:] $a.tell(choice_i)$ and $I_i = (I_{i-1}, choice_i)$ update the internal state of algorithm a : if $m_i$ is better ($choice_i=1$) then $\sigma = 2*\sigma$, otherwise ($choice_i=0$) $\sigma = 2^{-\frac{1}{4}}*\sigma$
    \item[$\rightarrow$] repeat steps 2 to 6 for b iterations
\end{description}
}

\end{document}